\newtheorem{theorem}{Theorem}[section]
\newtheorem{definition}[theorem]{Definition}
\newtheorem{corollary}[theorem]{Corollary}
\newtheorem{algorithm}[theorem]{Algorithm}
\newtheorem{assumption}[theorem]{Assumption}
\newtheorem{lemma}[theorem]{Lemma}
\newcommand*\bigcdot{\mathpalette\bigcdot@{.5}}
\newcommand*\bigcdot@[2]{\mathbin{\vcenter{\hbox{\scalebox{#2}{$\m@th#1\bullet$}}}}}
\begin{document}

\title{P\'{o}lya Urn Latent Dirichlet Allocation: \\ a doubly sparse massively parallel sampler}

\author{Alexander Terenin, M{\aa}ns Magnusson, Leif Jonsson, David Draper%
\thanks{A. Terenin was with Imperial College London (UK)}%
\thanks{M. Magnusson and L. Jonsson were with Link\"{o}ping University (SE)}%
\thanks{D. Draper was with University of California, Santa Cruz}}

\IEEEpubid{This work has been submitted to the IEEE for possible publication. Copyright may be transferred without notice.}

\IEEEtitleabstractindextext{\begin{abstract}
Latent Dirichlet Allocation (LDA) is a topic model widely used in natural language processing and machine learning.
Most approaches to training the model rely on iterative algorithms, which makes it difficult to run LDA on big corpora that are best analyzed in parallel and distributed computational environments.
Indeed, current approaches to parallel inference either don't converge to the correct posterior or require storage of large dense matrices in memory.
We present a novel sampler that overcomes both problems, and we show that this sampler is faster, both empirically and theoretically, than previous Gibbs samplers for LDA.
We do so by employing a novel P\'{o}lya-urn-based approximation in the sparse partially collapsed sampler for LDA.
We prove that the approximation error vanishes with data size, making our algorithm asymptotically exact, a property of importance for large-scale topic models.
In addition, we show, via an explicit example, that -- contrary to popular belief in the topic modeling literature -- partially collapsed samplers can be more efficient than fully collapsed samplers.
We conclude by comparing the performance of our algorithm with that of other approaches on well-known corpora.
\end{abstract}

\begin{IEEEkeywords}
Bayesian inference, Big Data, computational complexity, Gibbs sampling, Latent Dirichlet Allocation, Markov Chain Monte Carlo, natural language processing, parallel and distributed systems, topic models
\end{IEEEkeywords}}

\maketitle

\section{Introduction}

\IEEEPARstart{L}{atent} Dirichlet Allocation (LDA) \cite{blei03} is a topic model widely used in natural language processing for probabilistically identifying latent semantic themes in large collections of text documents, referred to as corpora.
It does this by inferring the latent distribution of topics for each document based only on the word tokens, without any supervised labeling.
In LDA, each document $d \in \{1,..,D\}$ is assigned a probability vector $\v{\theta}_d$, where $\theta_{d,k} = \P(\text{topic } k \text{ appears in document } d)$ -- see Table \ref{notation} for a summary of the notation used in this paper.
Then, within document $d$, each word, or token, at position $i$ is assigned a topic indicator $z_{i,d} \in \{1,..,K\}$.
If word $w_{i,d}$ is assigned to topic $k$, it is assumed to be randomly drawn from a probability vector $\v{\phi}_k$ over possible words within that topic.

LDA makes two key exchangeability assumptions.

\begin{enumerate}[(1)]
\item All documents are exchangeable.
\item Within each document, word tokens are exchangeable (bag of words).
\end{enumerate}

These assumptions yield a likelihood for the generative process described in Figure \ref{lda-dag}, which is then combined with conjugate Dirichlet priors on $\v{\phi}_k$ and $\v{\theta}_d$ to form a hierarchical Bayesian model.
The posterior distribution describes, for every word, which topic the word belongs to.

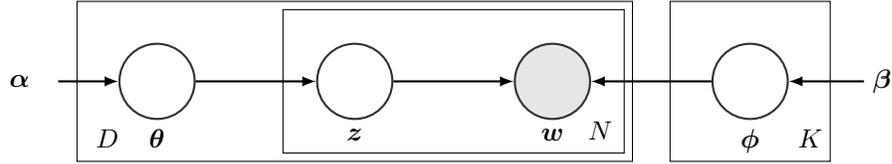
\begin{figure*}[t!]
\begin{center}
\begin{tikzpicture}
\tikzstyle{main}=[circle, minimum size = 10mm, thick, draw =black!80, node distance = 16mm]
\tikzstyle{connect}=[-latex, thick]
\tikzstyle{box}=[rectangle, draw=black!100]
\node[main, draw=none, fill=none] (alpha) {$\v{\alpha}$};
\node[main] (theta) [right=of alpha,xshift=-8mm,label=below:$\v{\theta}$] { };
\node[main] (z) [right=of theta,label=below:$\v{z}$] {};
\node[main, fill = black!10] (w) [right=of z,label=below:$\v{w}$] { };
\node[main] (phi) [right=of w,label=below:$\v{\phi}$] { };
\node[draw=none,fill=none,right=of phi] (beta) {$\v{\beta}$};
\path
(alpha) edge [connect] (theta)
(theta) edge [connect] (z)
(z) edge [connect] (w)
(beta) edge [connect] (phi)
(phi) edge [connect] (w);
\node[rectangle, inner sep=0mm, fit= (z) (w),label=below right:$N$, yshift=1mm, xshift=11.5mm] {};
\node[rectangle, inner sep=0mm, fit= (phi),label=below right:$K$, xshift=0mm] {};
\node[rectangle, inner sep=0mm, xshift=-25mm, fit= (theta) (z) (w), label=below left:$D$] {};
\node[rectangle, inner sep=4.4mm,draw=black!100, fit= (z) (w)] {};
\node[rectangle, inner sep=5.5mm, draw=black!100, fit = (theta) (z) (w) ] {};
\node[rectangle, inner sep=5.5mm, draw=black!100, fit = (phi)] {};
\end{tikzpicture}

\caption{Directed acyclic graph for LDA.}
\label{lda-dag}
\end{center}
\end{figure*}

\begin{table*}[t!]
\begin{center}
\small{
\begin{tabular}{c l c c l}
\hline
Symbol & Description && Symbol & Description
\\
\cline{1-2} \cline{4-5}
$V$ & Vocabulary size && $\m{\Phi} : K \cross V$ & Word-topic probabilities
\\
$D$ & Total number of documents && $\v{\phi}_k : 1 \cross V$ & Word probabilities for topic $k$
\\
$N$ & Total number of words && $\v{\beta} : 1 \cross V$ & Prior concentration vector for $\v{\phi}_k$
\\
$K$ & Total number of topics && $\m{n} : K \cross V$ & Topic-word sufficient statistic
\\
$v(i)$ & Word type for word $i$ && $\m{\Theta} : D \cross K$ & Document-topic probabilities
\\
$d(i)$ & Document for word $i$ && $\v{\theta}_d : 1 \cross K$ & Topic probabilities for document $d$
\\
$w_{i,d}$ & Word $i$ in document $d$ && $\v{\alpha} : 1 \cross K$ & Prior concentration vector for $\v{\theta}_d$
\\
$z_{i,d}$ & Topic indicator for word $i$ in $d$ && $\m{m} : D \cross K$ & Document-topic sufficient statistic
\\
\hline
\end{tabular}
}
\end{center}
\caption{Notation for LDA. Sufficient statistics are conditional on algorithm's current iteration. Bold symbols refer to matrices, bold italic symbols refer to vectors.}
\label{notation}
\end{table*}

To train the model, we need to draw samples from the posterior distribution.
This can be done in a variety of ways, most of which belong to the following broad classes of methods.

\begin{itemize}

\item \emph{Markov Chain Monte Carlo} (MCMC) techniques \cite{griffiths04, magnusson15, li14, yuan15, chen16, newman09, yao09} such as Gibbs sampling can be used, and yield samples from the posterior upon convergence.

\item \emph{Variational Bayesian} (VB) methods \cite{blei03, hoffman13} are also used, which upon convergence yield samples from a variational approximation of the posterior.

\item \emph{Expectation-Maximization} (EM) techniques \cite{li16} are also applicable, which converge to the max a posteriori approximation of the posterior.

\end{itemize}

These are by no means exhaustive: other approaches, such as \emph{spectral} \cite{anandkumar12} and \emph{geometric} \cite{yurochkin16, yurochkin17} techniques can also be applied.
Some methods \cite{hoffman13} focus on rapidly learning the document-topic proportions $\m\Phi$ by avoiding batch calculations with the topic indicators $\v{z}$, whereas others -- including this work -- assume $\v{z}$ itself is one of the quantities of user interest and attempt to calculate the full posterior as efficiently as possible.

We focus here on MCMC techniques -- empirically, their scalability in topic models is comparable with VB and EM, and subject to convergence, they are exact, i.e., they yield the correct posterior.

LDA is increasingly being used with large data sets that are best analyzed in parallel and distributed computational environments.
To be efficient in such settings, a sampler must do the following.

\begin{enumerate}[(1)]

\item Expose sufficient parallelism in a way compatible with the hardware being used.

\item Take advantage of sparsity found in natural language \cite[Ch.\ 2]{zipf68} to control asymptotic computational complexity and memory requirements.

\end{enumerate}

This makes training models a challenge, because MCMC techniques (along with VB and EM) are iterative algorithms that tend to be inherently sequential.
Furthermore, though existing techniques are capable of utilizing sparsity in the sufficient statistics $\m{m}$ and $\m{n}$ for $\m\Phi$ and $\m\Theta$, current approaches can only use sparsity in one sufficient statistic or the other to reduce computational cost, rather than in both simultaneously.
This means that the cost depends on whichever one is \emph{less sparse}.

To make this precise, we call an algorithm \emph{singly sparse} if its iterative computational complexity is at least $O\sbr[1]{\sum_{i=1}^N K_{d(i)}^{(\bigcdot)}}$ or $O\sbr[1]{\sum_{i=1}^N K_{v(i)}^{(\bigcdot)}}$, where $K_{d(i)}^{(\m{m})}$ is the number of nonzero topics in the row of $\m{m}$ corresponding to word $w_{i,d}$, $K_{v(i)}^{(\m{n})}$ is the number of nonzero topics in the column of $\m{n}$ corresponding to word $w_{i,d}$, and $K_{d(i)}^{(\m\Theta)}$, $K_{v(i)}^{(\m\Phi)}$ are defined analogously.
In contrast, we call an algorithm \emph{doubly sparse} if its expected iterative complexity is at most $O\sbr[1]{\sum_{i=1}^N \min\{K_{d(i)}^{(\bigcdot)}, K_{v(i)}^{(\bigcdot)}\}}$ -- such an algorithm can use sparsity in both sufficient statistics simultaneously in such a way that its computational cost depends on whichever one is \emph{more sparse}.

This is particularly the case in the standard MCMC approach -- the collapsed Gibbs sampler of \textcite{griffiths04} modified according to \textcite{yao09} -- where $\m{\Phi}$ and $\m{\Theta}$ are integrated out analytically, and each topic indicator $z_{i,d}$ is sampled one by one conditional on all other topic indicators $\v{z}_{-i,d}$.
In terms of computational cost, the resulting algorithm is singly sparse due to dependence, and exposes very little parallelism, which limits scalability considerably.

We build on the sparse partially collapsed sampler of \textcite{magnusson15}.
This algorithm is exact, parallelizable by documents, and capable of utilizing sparsity in the sufficient statistic $\m{m}$ for $\m\Theta$.
Further, it possesses favorable iterative complexity, and its convergence rate has been empirically shown to be reasonably fast.
Unfortunately, it is singly sparse in the sense that it cannot use sparsity in $\m{n}$, and unlike the standard approach it requires storage of $\m{\Phi}$, a large dense matrix.

Our contribution is a method to bypass this issue by introducing a P\'{o}lya urn approximation.
This allows us to construct a doubly sparse sampler with lower memory requirements, faster runtime, and more efficient iterative complexity.

\section{Previous Work} \label{previous-work}

The standard approach for LDA is the fully collapsed sampler of \textcite{griffiths04}, modified according to \textcite{yao09} to utilize sparsity in the sufficient statistics $\m{n}$ and $\m{m}$.
This sampler has iterative sampling complexity $O\sbr[1]{\sum_{i=1}^N \max\{K_{d(i)}^{(\m{m})}, K_{v(i)}^{(\m{n})}\}}$ -- singly sparse due to the presence of $\max\{..\}$ -- and mixing rate that appears reasonable empirically.
A variety of methods for scalable LDA, including in parallel and distributed environments, have been proposed.

\emph{Partially Collapsed Samplers}.
In this class, $\m{\Theta}$ is integrated out analytically but $\m{\Phi}$ is retained.
Unlike the fully collapsed sampler, the partially collapsed sampler can be parallelized over documents.
This is possible due to the exchangeability assumptions and conjugacy structure present in the LDA model -- a general feature of many Bayesian models \cite{terenin18}.
In \textcite{magnusson15} it is shown that this class can utilize sparsity in a fashion similar to \textcite{li14} to improve sampling speed and yield a singly sparse iterative complexity of $O\sbr[1]{\sum_{i=1}^N K_{d(i)}^{(\m{m})}}$.

Partially collapsed samplers were initially studied in the parallel setting by \textcite{newman09} but were quickly dismissed for having slower convergence rates.
This claim is incorrect: we exhibit an explicit counterexample in Appendix A in which a partially collapsed Gibbs sampler converges arbitrarily faster than a fully collapsed one.
Convergence rate is thus model and algorithm-specific: for LDA, empirical results in \textcite{magnusson15} show that partially and fully collapsed samplers have comparable convergence behavior.

\emph{Asynchronous Distributed LDA}.
\textcite{newman09} propose modifying the collapsed Gibbs sampler presented by \textcite{griffiths04} by simply ignoring the sequential requirement.
Unfortunately, the resulting algorithm is not exact.
In \textcite{ihler12}, the authors bound the 1-step transition error -- their analysis suggests that it is unlikely to accumulate over multiple iterations, and that it is likely to decrease with added parallelism.
However, recent theory on Asynchronous Gibbs Sampling in \textcite{terenin16a} suggests the opposite -- that the approximation error will increase with the added parallelism, because the algorithm will miscalculate the posterior's covariance structure and jump into regions of low probability.
AD-LDA has been studied empirically in \textcite{magnusson15}, where it was shown that AD-LDA can stabilize at a lower log-posterior compared with other samplers, suggesting that asynchronicity can cause it to fail to find a good mode of the posterior distribution.
We are unsure about how to reconcile these differences and recommend further empirical study of AD-LDA's performance, which may depend on the posterior's covariance structure and therefore on the specific corpus being studied.

\emph{Metropolis-within-Gibbs Samplers}. \textcite{li14}, \textcite{yuan15}, and \textcite{chen16} propose methods in which Gibbs steps are replaced with Metropolis-Hastings steps.
These modifications can reduce the iterative sampling complexity per document to $O\sbr[1]{\sum_{i=1}^N K_{d(i)}^{(\m{m})}}$, the number of topics in each document, or even $O(N)$.
However, the use of such steps can affect the MCMC algorithm's mixing rate in ways that are difficult to understand theoretically.
As a result, the combined effect on iterative complexity together with convergence rate is unknown and may depend on the specific corpus.
These ideas are applicable to both fully collapsed and partially collapsed Gibbs samplers, and can be used in combination with the ideas that we present below.

\section{The Algorithm} \label{sec:the-algorithm}

We begin by introducing the basic partially collapsed Gibbs sampler in \textcite{newman09} and \textcite{magnusson15}.
Let $\m{m}^{-i}$ be the sufficient statistic matrix $\m{m}$ for $\m{\Theta}$ with the portion of the statistic corresponding to word $i$ removed.

\begin{algorithm}[Partially Collapsed Gibbs Sampler]
\label{PC-LDA}
Repeat until convergence:
\begin{itemize}
\item Sample $\v{\phi}_k \dist[Dir](\v{n}_k + \v{\beta})$ in parallel for $k=1,..,K$.
\item Sample $z_{i,d} \propto \phi_{k,v(i)}\,\alpha_k + \phi_{k,v(i)}\, m_{d,k}^{-i}$ in parallel for $d=1,..,D$.
\end{itemize}
\end{algorithm}

\subsection{Notation}

Define
\begin{align}
\beta_{\bigcdot} &= \sum_{v=1}^V \beta_v
&
\v{F}_0 &= \frac{\v{\beta}}{\beta_{\bigcdot}}
&
n_{k,\bigcdot} &= \sum_{v=1}^V n_{k,v}
&
\v{\hat{F}}_k &= \frac{\v{n}_k}{n_{k,\bigcdot}}
\,.
\end{align}

The Dirichlet distribution may be parameterized either with a single probability vector $\v{\beta}$ or with two inputs, a concentration parameter $\beta_{\bigcdot}$ and a mean probability vector $\v{F}_0$ whose product is $\v{\beta}$ -- this permits us in what follows to write $\f{Dir}(\v\beta)$ and $\f{Dir}(\beta_{\bigcdot}, \v{F}_0)$ interchangeably.
We introduce the two-input parametrization here because we consider asymptotics where the concentration parameter approaches $\infty$ while the mean probability vector is held fixed.
In Algorithm 1 this permits us to write either $\v{\phi}_k \dist[Dir](\v{n}_k + \v{\beta})$ or
\begin{equation}
\v{\phi}_k \dist[Dir]\sbr{\beta_{\bigcdot} + n_{k,\bigcdot}, \frac{\beta_{\bigcdot}}{\beta_{\bigcdot}+n_{k,\bigcdot}}\v{F}_0 + \frac{n_{k,\bigcdot}}{\beta_{\bigcdot}+n_{k,\bigcdot}}\v{\hat{F}}_k}
\,.
\end{equation}

Finally, as we consider asymptotic convergence of random variables, we must introduce the necessary notions.
We work exclusively with convergence in distribution, which we metrize via the L\'{e}vy-Prokhorov metric -- though this is not strictly necessary, introducing this metric serves to simplify notation.
As the $V$-dimensional probability simplex is a separable space, since it is a subset of $\R^V$, convergence in the L\'{e}vy-Prokhorov metric is equivalent here to convergence in distribution.
Thus we may write
\[
d_{\f{LP}}(\v{x}, \v{x}^*) \goesto 0
\]
to denote pointwise convergence of the cumulative distribution functions of random vectors $\v{x}$ and $\v{x}^*$.

\subsection{Intuition}

We now introduce a small modification to Algorithm \ref{PC-LDA}.
Instead of sampling from the Dirichlet distribution, we sample from a related distribution -- the P\'{o}lya urn distribution, which arises in analysis of the P\'{o}lya urn model.

Throughout this work, we use ideas originating from the infinite-dimensional Dirichlet process literature to reason about finite-dimensional Dirichlet distributions.
These can be viewed as a special case of a Dirichlet process on finite support, and we make use of this connection to develop ideas.

In the P\'{o}lya urn model, there is an urn containing a set of balls of different colors.
We begin with a finite number of balls, with colors distributed according to some initial distribution.
We draw and remove a ball, and then place two balls of the same color back into the urn.
We repeat this process iteratively, letting $C_t$ be a random variable whose distribution is the distribution of colors inside the urn at time $t$.
The sequence $\{C_t : t \in \N\}$ can be shown exchangeable, and is called a \emph{P\'{o}lya sequence}.
A classical result of \textcite{blackwell73} states that it admits a Dirichlet process as its de Finetti measure.

This suggests a way to bypass $\m{\Phi}$ being dense: instead of drawing a random probability vector from the Dirichlet distribution, we can instead draw a set of IID samples from a P\'{o}lya urn and use them to form a probability vector.
Since we are making a finite number of draws from a discrete distribution, some entries will be zero with non-negligible probability, and hence $\v{\phi}_k$ will be sparse.

To proceed, we need to (a) decide how many draws to make and (b) find a parallel sampling scheme.
Recall the representation of the Dirichlet distribution using IID Gamma random variables \cite{kotz00} -- letting $\gamma_j \dist[G](\varpi_j, 1)$, where $\f{G}$ denotes a Gamma distribution with the shape-scale parametrization, we have that if
\begin{equation}
\v\phi_k = \sbr{\frac{\gamma_1}{\sum_{j=1}^V \gamma_j}, .., \frac{\gamma_V}{\sum_{j=1}^V \gamma_j}}
\end{equation}
then $\v\phi_k \dist[Dir](\v\varpi)$.

Notice that $\gamma_j \in \R^+$, $\E(\gamma_j) = \varpi_j$, and $\Var(\gamma_j) = \varpi_j$.
Consider using $\tilde{\gamma}_j \dist[Pois](\varpi_j)$ as a replacement, for which it is also true that $\E(\tilde{\gamma}_j) = \varpi_j$ and $\Var(\tilde{\gamma}_j) = \varpi_j$.
It will be shown below that doing so will precisely yield probability vectors based on sampling from the P\'{o}lya urn.

This procedure has been used by \textcite{draper17} to analyze large data sets in the context of $A/B$ testing.
It is scalable, as it is just a normalized version of the Poisson sampler described in \textcite{chamandy15} for use in parallel and distributed environments.

\subsection{Convergence}

We now prove that the resulting distribution converges to the original Dirichlet with increasing data size.
We begin with the necessary definition.

\begin{definition}[Poisson P\'{o}lya Urn]
For positive $\varpi$ and probability vector $\v{F}$, let $\v\varpi = \varpi\v{F}$.
We say that
\begin{align}
\v{x} &\dist[PPU](\v\varpi)
&
&\text{or equivalently}
&
\v{x} &\dist[PPU](\varpi, \v{F})
\end{align}
if we have that
\begin{equation}
\v{x} = \sbr{\frac{\tilde{\gamma}_1}{\sum_{j=1}^V \tilde{\gamma}_j}, .., \frac{\tilde{\gamma}_V}{\sum_{j=1}^V \tilde{\gamma}_j}}
\end{equation}
for $\tilde{\gamma}_j \dist[Pois](\varpi_j)$, provided that the sum is nonzero.
\end{definition}

As with the Dirichlet, we consider two parametrizations -- one based on a concentration vector, and another based on a concentration parameter together with a mean probability vector.
Note that this definition is exactly the same as the Gamma representation of the Dirichlet distribution \cite{kotz00}, except for Gammas being replaced with Poissons.
We now introduce our main result.

\begin{theorem}[Poisson P\'{o}lya Urn Asymptotic Convergence] \label{polya-urn-conv}
Let
\begin{align}
\v{x} &\dist[PPU](\varpi, \v{F})
&
&\text{and}
&
\v{x}^* &\dist[Dir](\varpi, \v{F})
.
\end{align}
Then for all $\v{F}$ we have $d_{\f{LP}}(\v{x}, \v{x}^*) \goesto 0$ as $\varpi \goesto \infty$.
\end{theorem}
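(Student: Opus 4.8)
\emph{Proof proposal.} The plan is to show that \emph{both} laws collapse onto the same degenerate limit. Concretely, I would prove that $\v{x} \goesto \v{F}$ and $\v{x}^* \goesto \v{F}$ in distribution as $\varpi \goesto \infty$, where $\v{F}$ is regarded as a constant (Dirac) random vector; since $d_{\f{LP}}$ is a genuine metric on laws of simplex-valued random vectors, the triangle inequality $d_{\f{LP}}(\v{x},\v{x}^*) \le d_{\f{LP}}(\v{x},\v{F}) + d_{\f{LP}}(\v{F},\v{x}^*)$ then delivers the claim. Morally this works because the Dirichlet and the Poisson Pólya urn both have mean $\v{F}$ with fluctuations of order $\varpi^{-1/2}$, which wash out in the limit; capturing the common degenerate limit is all we need.

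For the Dirichlet side this is classical: $\E(x^*_j) = F_j$ and $\Var(x^*_j) = F_j(1-F_j)/(\varpi+1) \goesto 0$, so $x^*_j \goesto F_j$ in $L^2$ and hence in probability for each of the finitely many $j$, giving $\v{x}^* \goesto \v{F}$ in distribution. (Equivalently, use the Gamma representation $x^*_j = \gamma_j / \sum_l \gamma_l$ with $\gamma_j \dist[G](\varpi F_j,1)$ and the law of large numbers applied as the concentration parameter diverges.)

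The Poisson Pólya urn side is the heart of the argument. Write $\varpi_j = \varpi F_j$ and note $\sum_{j=1}^V \varpi_j = \varpi$. For $j$ with $F_j > 0$ we have $\tilde{\gamma}_j \dist[Pois](\varpi F_j)$, so $\E(\tilde{\gamma}_j/\varpi) = F_j$ and $\Var(\tilde{\gamma}_j/\varpi) = F_j/\varpi \goesto 0$; by Chebyshev, $\tilde{\gamma}_j/\varpi \goesto F_j$ in probability, while for $j$ with $F_j = 0$ we have $\varpi_j = 0$ and $\tilde{\gamma}_j \equiv 0$ trivially. Hence the normalizer $S_\varpi := \sum_{j=1}^V \tilde{\gamma}_j$ (which is $\f{Pois}(\varpi)$-distributed) satisfies $S_\varpi/\varpi \goesto 1$ in probability; in particular $\P(S_\varpi = 0) = e^{-\varpi} \goesto 0$, so the event on which $\v{x}$ is left undefined in the definition of $\f{PPU}$ is asymptotically negligible and, after defining $\v{x}$ arbitrarily there, does not affect the weak limit. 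Now apply the continuous mapping theorem to $g\colon \v{y} \mapsto \v{y}/\sum_l y_l$, which is continuous on the open set $\{\v{y}\in\R^V : \sum_l y_l > 0\}$; since the deterministic limit $\v{F}$ of $(\tilde{\gamma}_j/\varpi)_{j}$ lies in this set (because $\sum_l F_l = 1$), we get $\v{x} = g\big((\tilde{\gamma}_j/\varpi)_j\big) \goesto g(\v{F}) = \v{F}$ in distribution.

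The main obstacle is bookkeeping rather than probability: one must treat coordinates with $F_j = 0$ separately, argue that conditioning on a nonzero normalizer is harmless in the limit, and verify the continuity hypothesis of the continuous mapping theorem at the degenerate limit point. The genuinely probabilistic input — a law-of-large-numbers statement for Poisson variates with diverging rate — is elementary, so I expect no real difficulty. This crude argument suffices precisely because the theorem asks only for convergence and not a rate; a sharper, central-limit-level computation would additionally show that the $O(\varpi^{-1/2})$ Gaussian fluctuations of the Poisson Pólya urn match those of the Dirichlet.
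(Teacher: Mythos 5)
Your argument is correct for the theorem as literally stated, but it proves it by a genuinely different and more elementary route than the paper. The paper's proof works at the central-limit scale: it passes through the hierarchical Poisson-process/multinomial representation, shows via the Gamma representation, the CLT, and the delta method that the Dirichlet is asymptotically Gaussian (Lemma B.1), shows via probability generating functions that the Poisson P\'{o}lya urn is asymptotically multinomial and hence Gaussian (Lemmas B.2--B.3), and concludes by matching the first two moments of the two Gaussian sequences. You instead observe that both laws converge to the \emph{same degenerate limit} $\delta_{\v{F}}$ -- a law-of-large-numbers statement needing only Chebyshev, the continuous mapping theorem, and the triangle inequality for $d_{\f{LP}}$ -- which already forces $d_{\f{LP}}(\v{x},\v{x}^*) \goesto 0$. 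Your route is shorter and avoids the somewhat delicate step in the paper where ``Gaussians are uniquely characterized by their first two moments'' is applied to a sequence of Gaussians with vanishing covariance rather than to a single fixed limit. What the paper's heavier machinery buys is the (implicit) stronger fact that the $O(\varpi^{-1/2})$ fluctuations of the two distributions agree, which is the content that actually justifies using the urn as a drop-in replacement inside the Gibbs sampler and underlies the paper's later remark on the $O(\sqrt{n_{k,\bigcdot}})$ approximation rate; your degenerate-limit argument says nothing about fluctuations, as you yourself note. Two small caveats: your handling of the zero-normalizer event should be phrased as a total-variation bound between the conditional law given $\sum_j \tilde\gamma_j > 0$ and the unconditional one (the discrepancy is $O(e^{-\varpi})$, so this is harmless), and the paper also asserts the result when $\v{F}$ varies with $\varpi$ -- your Chebyshev bounds are in fact uniform in $\v{F}$, but the continuous-mapping step as you state it assumes a fixed limit point and would need to be replaced by the explicit estimate $\abs{x_j - F_j} \le \abs{\tilde\gamma_j/\varpi - F_j}\,(\varpi/S_\varpi) + F_j \abs{\varpi/S_\varpi - 1}$ to cover that case.
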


\begin{proof}
Define a Poisson process $\Pi$ over the unit interval with intensity $\varpi$, and let $\pi \dist[Pois]^+(\varpi)$ be the number of arrivals over the entire interval -- here, $\f{Pois}^+$ refers to the Poisson distribution conditioned to be nonzero.
Partition the unit interval into a set of subintervals $B_j$ with lengths equal to each component of $\v{F}$.
Let $\tilde{\gamma}_j$ be the number of arrivals in each subinterval.
Then
\begin{equation}
\frac{\tilde{\gamma}_j}{\sum_{j=1}^V \tilde{\gamma}_j}
\end{equation} is just the proportion of arrivals of $\Pi$ in the subinterval $B_j$.
Thus, since $\Pi$ is a Poisson process, $\v{x}$ admits the hierarchical representation
\begin{align} \label{hierarchical-representation}
\pi &\dist[Pois]^+(\varpi)
&
\upsilon^{(j)} &\iid \v{F}
&
\v{x} = \frac{1}{\pi} \sum_{j=1}^{\pi} \v{I}_{\upsilon^{(j)}}
,
\end{align}
where $\v{I}_{\upsilon^{(j)}}$ is a vector that is one at $\upsilon^{(j)}$ and zero everywhere else.
A random variable admitting this hierarchical representation can be shown to converge to the desired Dirichlet -- proving this is more technical, so we defer the remainder of the proof to Appendix B, noting that the result holds for all $\v{F}$, regardless of whether $\v{F}$ is fixed or variable.
\end{proof}

For LDA we have that under the two-input parametrization
\begin{equation} \label{lda-polya-urn-parameters}
\v\phi_k \dist[PPU]\sbr{\beta_{\bigcdot}+n_{k,\bigcdot}, \frac{\beta_{\bigcdot}}{\beta_{\bigcdot}+n_{k,\bigcdot}} \v{F}_0 + \frac{n_{k,\bigcdot}}{\beta_{\bigcdot}+n_{k,\bigcdot}}\v{\hat{F}}_k}
\,,
\end{equation}
and, noting that $n_{k,\bigcdot}$ is large, we can now define our algorithm.

\begin{algorithm}[P\'{o}lya Urn Partially Collapsed Gibbs Sampler] \label{polya-urn-LDA}
Repeat until convergence:
\begin{itemize}
\item Sample $\v{\phi}_k \dist[PPU](\v{n}_k + \v{\beta})$ in parallel for $k=1,..,K$.
\item Sample $z_{i,d} \propto \phi_{k,v(i)}\,\alpha_k + \phi_{k,v(i)}\, m_{d,k}^{-i}$ in parallel for $d=1,..,D$.
\end{itemize}
\end{algorithm}

By Theorem \ref{polya-urn-conv}, Algorithm \ref{polya-urn-LDA} converges to Algorithm \ref{PC-LDA} as $n_{k,\bigcdot} \goesto \infty$ for all $k$, in the sense that their respective Gibbs steps converge in distribution.
Using our method gives a number of improvements: $\m{\Phi}$ is a sparse matrix and can thus be stored inexpensively, $\m{\Phi}$ can be sampled more efficiently, and the resulting algorithm has better iterative computational complexity for sampling $\v{z}$.
We now describe these improvements in detail.

\subsection{Efficient sampling of $\m{\Phi}$}

The standard partially collapsed sampler requires the rows of $\m{\Phi}$ to be sampled from a Dirichlet distribution.
This is done by drawing Gamma random variables, which can be accomplished with the technique in \textcite{marsaglia00}.

On the other hand, the Poisson P\'{o}lya urn requires us to draw Poisson random variables instead of Gammas -- and, crucially, these Poisson random variables have rate parameters of the form $\beta + l$, $l \in \N_0$.
For $l = 0,..,L$, we can draw these random variables using a precomputed Walker alias table \cite{walker77}.
For $l > L$, we can use the fact that the Poisson distribution is asymptotically Gaussian as its rate parameter increases, and make rounded Gaussian draws.
We choose $L = 100$ and find that the resulting scheme reduces time spent sampling $\m{\Phi}$ significantly -- see Section \ref{sec:results}.

\subsection{Efficient sampling of $\v{z}$}

The sparsity of $\m{\Phi}$ can be used to improve sampling speed for $\v{z}$.
Consider the approach in \textcite{magnusson15}, where each probability is divided into $a$ and $b$, as follows:
\begin{align}
\P(z_{i,d}=k \given \v{z}_{-i,d},\m{\Phi},w_{i,d}) &\propto \phi_{k,v(i)}\sbr{\alpha_k+m_{d,k}^{-i}}
\nonumber
\\
&\propto \underbrace{\phi_{k,v(i)}\,\strut\alpha_k}_{a}+\underbrace{\phi_{k,v(i)}\,\strut m_{d,k}^{-i}}_{b}
.
\end{align}

To sample $z_i$, we draw a random variable $u \dist[U](0,1)$ and compute $u_\sigma = u \del{\sigma_a + \sigma_b}$, where
\begin{align}
\sigma_a &= \sum_{k=1}^K \phi_{k,v(i)} \, \alpha_k
&
&\text{and}
&
\sigma_b &= \sum_{k=1}^K \phi_{k,v(i)} \, m_{d,k}^{-i}
.
\end{align}

If $u_\sigma < \sigma_a$, we draw $z_i$ from a precomputed alias table, otherwise we iterate over $\phi_{k,v(i)} \, m_{d,k}^{-i}$.
Here we can use the fact that both $\m{\Phi}$ and $\m{m}$ are sparse, and iterate over whichever is smaller, to get a doubly sparse algorithm.
This yields improved performance -- see Section \ref{sec:results} -- and also produces improved computational complexity, which we subsequently show.
The sparsity in $\m{\Phi}$ also makes it possible to construct sparse alias tables for $a$, further reducing the memory requirements for the algorithm.

\subsection{Computational Complexity}
\label{sec:comp_complex}

We now derive the iterative computational complexity of P\'{o}lya Urn LDA.
To do so, we need to consider how quickly the number $V$ of unique words grows with the number $N$ of total words.
We assume the following.
\begin{assumption}[Heaps' Law]
The number of unique words in a document follows Heaps' Law $V = \xi N^\psi$ with constants $\xi > 0$ and $0 < \psi < 1$.
\end{assumption}

For most languages, these constants will be in the ranges $5 < \xi < 50$ and $0.4 < \psi < 0.6$ -- see \textcite{araujo97,heaps78} for more details.
Let $K_{d(i)}^{(\m{m})}$ be the number of existing topics in document $d$ associated with word $i$.
Let $K_{v(i)}^{(\m\Phi)}$ be the number of nonzero topics in the row of $\m{\Phi}$ corresponding to word $i$.
Note that $\E\sbr[1]{K_{v(i)}^{(\m\Phi)}} = K_{v(i)}^{(\m{n})}$ at any given iteration.
We initially assume that the number of topics is fixed, and subsequently consider the case where it grows with the size of the data.

\begin{theorem}[Computational Complexity] \label{computational-complexity-theorem}
Assuming a vocabulary size following Heaps' Law, the iterative complexity of P\'{o}lya Urn LDA is
\begin{equation} \label{computational-complexity}
O\sbr{\sum_{i=1}^N \min\cbr{K_{d(i)}^{(\m{m})}, K_{v(i)}^{(\m\Phi)}}}
.
\end{equation}
\end{theorem}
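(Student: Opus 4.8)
The plan is to bound the cost of one iteration by summing the costs of its three ingredients -- sampling $\m\Phi$, the per-word-\emph{type} precomputation used by the $a$/$b$ split, and the sweep over $\v{z}$ -- and then to use Heaps' Law to argue that the sweep dominates. I would start with the $\v{z}$ sweep, since this is where the stated bound comes from. Fix a word $i$ of type $v = v(i)$ in document $d = d(i)$. Drawing $z_{i,d}$ consists of: forming $\sigma_b = \sum_k \phi_{k,v}\, m_{d,k}^{-i}$, an inner product of two sparse vectors that is evaluated by traversing whichever of the two supports is shorter, at cost $O\del{\min\cbr{K_{d(i)}^{(\m{m})}, K_{v(i)}^{(\m\Phi)}}}$; drawing $u \dist[U](0,1)$ and comparing $u_\sigma = u\del{\sigma_a + \sigma_b}$ against the precomputed $\sigma_a$, at cost $O(1)$; and then either an $O(1)$ draw from the precomputed alias table for part $a$ when $u_\sigma < \sigma_a$, or else a linear scan accumulating the products $\phi_{k,v}\, m_{d,k}^{-i}$ until the threshold is crossed -- again over the shorter support, hence $O\del{\min\cbr{K_{d(i)}^{(\m{m})}, K_{v(i)}^{(\m\Phi)}}}$. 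Either branch costs $O\del{\min\cbr{K_{d(i)}^{(\m{m})}, K_{v(i)}^{(\m\Phi)}}}$, so summing over $i$ bounds the whole sweep by $O\del{\sum_{i=1}^N \min\cbr{K_{d(i)}^{(\m{m})}, K_{v(i)}^{(\m\Phi)}}}$.

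Next I would bound the overhead. Sampling $\v{\phi}_k \dist[PPU](\v{n}_k + \v{\beta})$ is one Poisson variate per entry of row $k$ -- each in $O(1)$ via the precomputed Walker alias table when the rate is at most $L$ and via a rounded Gaussian otherwise -- followed by a normalization, so all of $\m\Phi$ is sampled in $O(KV)$. For each word type $v$ one then computes $\sigma_a = \sum_k \phi_{k,v}\,\alpha_k$ and builds the alias table for part $a$, using only the $K_v^{(\m\Phi)}$ nonzero entries of that row of $\m\Phi$, for a total of $O\del{\sum_{v=1}^V K_v^{(\m\Phi)}}$, which is at most $O(KV)$. With the remaining sparse-matrix bookkeeping (maintaining sparse $\m\Phi$ and $\m{m}$, forming $\m{m}^{-i}$) amortized $O(1)$ per operation, the overhead is $O(KV)$, so one iteration costs $O\del{KV + \sum_{i=1}^N \min\cbr{K_{d(i)}^{(\m{m})}, K_{v(i)}^{(\m\Phi)}}}$.

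It remains to absorb the overhead, and this is the step I expect to be the main obstacle, since the first two are essentially accounting. Heaps' Law gives $V = \xi N^\psi$ with $\psi < 1$, so with $K$ fixed $KV = O(N^\psi) = o(N)$; on the other hand $\min\cbr{K_{d(i)}^{(\m{m})}, K_{v(i)}^{(\m\Phi)}} \ge 1$ for every word except those in a singleton document (at most one per document) and those whose type is absent from every nonempty row of the sampled $\m\Phi$ (at most $V = o(N)$ words, again by Heaps' Law), so $\sum_{i=1}^N \min\cbr{K_{d(i)}^{(\m{m})}, K_{v(i)}^{(\m\Phi)}} = \Omega(N)$ for any corpus not dominated by singleton documents. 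The $O(KV)$ term is therefore swallowed and the iterative complexity is $O\del{\sum_{i=1}^N \min\cbr{K_{d(i)}^{(\m{m})}, K_{v(i)}^{(\m\Phi)}}}$. The crux is precisely this domination argument: it is where the sublinear vocabulary growth guaranteed by Heaps' Law is indispensable, and also where the mild non-degeneracy of the corpus has to be acknowledged.
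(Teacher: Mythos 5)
Your proposal is correct and follows the same route as the paper's proof: the $\min$ term comes from evaluating the sparse inner product $\sigma_b$ over the shorter of the two supports, $\sigma_a$ is precomputed at constant iterative cost, and the cost of sampling $\m{\Phi}$ is treated as negligible for fixed $K$. You are in fact more explicit than the paper on that last point -- the paper simply asserts that the $\m{\Phi}$ cost ``need not be considered,'' whereas you show $KV = O(N^\psi) = o(N)$ via Heaps' Law and argue the main sum is $\Omega(N)$; the only small slip there is that ``at most $V$ words'' whose type has an all-zero column of $\m{\Phi}$ counts word \emph{types} rather than tokens, but this occurs in a refinement the paper does not even attempt.
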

\begin{proof}
The iterative complexity of the sampler is equal to the complexity of sampling all $z_{i,d}$ plus the complexity of sampling $\m{\Phi}$.
Provided the number of topics is fixed, we need not consider the latter.
Thus, it suffices to show that the complexity of all $z_{i,d}$ is given by (\ref{computational-complexity}).
Consider a single $z_{i,d}$ with
\begin{align}
\P(z_{i,d}=k \given \v{z}_{-i,d},\m{\Phi},w_{i,d}) &\propto \phi_{k,v(i)}\sbr{\alpha_k+m_{d,k}^{-i}}
\nonumber
\\
&\propto \underbrace{\phi_{k,v(i)}\,\strut\alpha_k}_{a}+\underbrace{\phi_{k,v(i)}\,\strut m_{d,k}^{-i}}_{b}
.
\end{align}

To sample $z_{i,d}$ we need to calculate the normalizing constant
\begin{equation}
q(z) = \sum_{k=1}^K \phi_{k,v(i)}\sbr{\alpha_k+m_{d,k}^{-i}} = \sigma_a + \sigma_b
.
\end{equation}
The first term is identical for all $z_{i,d}$, so it can be precomputed, and its iterative complexity is constant.
The second term
\begin{equation}
\sigma_b = \sum_{k=1}^K \phi_{k,v(i)}\, m_{d,k}^{-i}
\end{equation}
is the product of two sparse vectors.
Suppose that both are stored in a data structure with $O(1)$ access, such as an array or hash map.
By iterating over the smaller vector, the product can be computed with complexity
\begin{equation}
O\sbr{\min\cbr{K_{d(i)}^{(\m{m})}, K_{v(i)}^{(\m\Phi)}}}
\end{equation}
for each $z_{i,d}$, and the result follows.
\end{proof}

Note the presence of $\min\{..\}$ in the complexity, in contrast with other approaches reviewed in Section \ref{previous-work}.
This demonstrates that P\'{o}lya Urn LDA is a doubly sparse algorithm.

The asymptotic complexity of LDA has also been studied in the setting in which the number $K$ of topics is assumed to grow with the number $N$ of total words \cite{magnusson15}, under the following assumption.

\begin{assumption}
The number of topics $K$ is assumed to follow the mean of a Dirichlet process mixture, and thus the number of topics is $\gamma \ln\sbr{1 + \frac{N}{\gamma}}$, where $\gamma$ is the concentration parameter of the Dirichlet process.
\end{assumption}

See \textcite{teh10} for more details.
Our result carries forward directly into this setting.

\begin{corollary}
Assuming a vocabulary size following Heaps' Law, and a number of topics following the mean of a Dirichlet Process mixture, the iterative complexity of P\'{o}lya Urn LDA is
\begin{equation}
O\sbr{\sum_{i=1}^N \min\cbr{K_{d(i)}^{(\m{m})}, K_{v(i)}^{(\m\Phi)}}}
.
\end{equation}
\end{corollary}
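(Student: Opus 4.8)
The plan is to reduce the corollary to Theorem~\ref{computational-complexity-theorem}: its proof already establishes the stated bound for every part of an iteration that does not scale with the number of topics, and that part of the argument never used the hypothesis that $K$ is fixed. Concretely, the proof of Theorem~\ref{computational-complexity-theorem} decomposes the iterative cost into the cost of drawing all $z_{i,d}$ and the cost of drawing $\m{\Phi}$, and shows that a single $z_{i,d}$ can be sampled in $O(\min\{K_{d(i)}^{(\m{m})}, K_{v(i)}^{(\m\Phi)}\})$ time once the word-type-specific scalar $\sigma_a$ and its sparse alias table have been precomputed; summing over the $N$ tokens gives $O\sbr{\sum_{i=1}^N \min\{K_{d(i)}^{(\m{m})}, K_{v(i)}^{(\m\Phi)}\}}$. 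Since this reasoning is insensitive to whether $K$ is constant, it transfers verbatim. It therefore remains only to verify that the two pieces of per-iteration work whose cost does scale with $K$ -- sampling $\m{\Phi}$, and precomputing for each of the $V$ word types the quantity $\sigma_a$ together with its alias table -- are asymptotically dominated by the above.

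For this I would bound those pieces crudely. Each row $\v{\phi}_k$ is drawn from the Poisson P\'{o}lya urn by making one Poisson draw per word type (with rate $\beta_v + n_{k,v}$), each in $O(1)$ time using the precomputed alias tables for small rates and the rounded Gaussian approximation for large rates described above, so sampling all of $\m{\Phi}$ costs $O(VK)$; building the $V$ per-word-type partial sums and alias tables likewise costs at most $O(K)$ each, hence $O(VK)$ in total. The full iteration thus runs in $O\sbr{\sum_{i=1}^N \min\{K_{d(i)}^{(\m{m})}, K_{v(i)}^{(\m\Phi)}\} + VK}$. Substituting the growth assumptions, Heaps' Law gives $V = \xi N^\psi$ with $0 < \psi < 1$, and the Dirichlet-process-mixture assumption gives $K = \gamma \ln\sbr{1 + \frac{N}{\gamma}}$, so $VK = \xi\gamma\, N^\psi \ln\sbr{1 + \frac{N}{\gamma}} = O(N^\psi \ln N)$, which is $o(N)$ since $\psi < 1$. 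Because each of the $N$ tokens requires at least constant work, $\sum_{i=1}^N \min\{K_{d(i)}^{(\m{m})}, K_{v(i)}^{(\m\Phi)}\}$ is $\Omega(N)$, so the $VK$ term is $o\sbr{\sum_{i=1}^N \min\{K_{d(i)}^{(\m{m})}, K_{v(i)}^{(\m\Phi)}\}}$ and is absorbed, leaving the claimed complexity.

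I expect the main -- indeed essentially the only -- obstacle to be this last piece of bookkeeping: being careful that nothing in the sampler that is done once per topic or once per word type grows faster than $O(VK)$ (so that the crude bound is legitimate), and then observing that the elementary estimate $N^\psi \ln N = o(N)$ for $\psi < 1$ makes that overhead vanish relative to the $z$-sampling cost. Everything else is inherited unchanged from the proof of Theorem~\ref{computational-complexity-theorem}, which is why the result ``carries forward directly.''
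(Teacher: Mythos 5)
Your proposal is correct and follows essentially the same route as the paper: decompose the per-iteration cost into the $\v{z}$-sampling cost (inherited unchanged from Theorem \ref{computational-complexity-theorem}) plus the $\m{\Phi}$-sampling cost, and show the latter is asymptotically negligible under Heaps' Law and the Dirichlet-process growth of $K$. The only difference is that the paper disposes of the $\m{\Phi}$ term by citing the corresponding analysis in \textcite{magnusson15} (the sampling procedure being draw-for-draw identical, with Poissons in place of Gammas), whereas you carry out that bookkeeping explicitly via $VK = O(N^\psi \ln N) = o(N)$ -- a more self-contained version of the same argument.
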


\begin{proof}
As before, the iterative complexity is equal to the complexity of sampling all $z_{i,d}$ plus the complexity of sampling $\m{\Phi}$.
As the procedure for sampling $\m\Phi$ under the Poisson P\`{o}lya urn is identical to that under the Dirichlet, save for Gamma random variables being replaced with Poisson random variables, it follows immediately from \textcite{magnusson15} that its contribution to the total complexity is asymptotically negligible under the given assumptions.
The result thus follows from Theorem \ref{computational-complexity-theorem}.
\end{proof}

\section{Performance Results} \label{sec:results}

To study the empirical performance of P\'{o}lya Urn LDA, we implemented our algorithm in Java using the open-source Mallet\footnote{\raggedright See \href{http://mallet.cs.umass.edu}{http://mallet.cs.umass.edu} and \href{https://github.com/lejon/PartiallyCollapsedLDA}{https://github.com/lejon/PartiallyCollapsedLDA}} \cite{mccallum02} framework.
We ran the algorithm on the Enron, New York Times, and PubMed corpora\footnote{\raggedright Enron and PubMed can be found at: \href{http://archive.ics.uci.edu/ml/datasets/Bag+of+Words}{http://archive.ics.uci.edu/ml/datasets/Bag+of+Words} \\
The New York Times Corpus can be found at: \href{https://catalog.ldc.upenn.edu/ldc2008t19}{https://catalog.ldc.upenn.edu/ldc2008t19}} -- see Table \ref{corpora}.

Computation for each experiment was performed on resources provided by Link\"oping University at the National Supercomputer Centre using an HP Cluster Platform 3000 with SL230s Gen8 compute nodes with two 8-core Intel Xeon E5--2660 processors each.
We used symmetric priors with hyper-parameters $\alpha=0.1$ and $\beta=0.01$, and ran each experiment five times with different random number seeds.
These are typical values and enable comparison with previous results in \textcite{magnusson15}.
Unless specified otherwise, each experiment ran on all 16 cores, and with a rare word limit of 10.

In our first experiment, we compare P\'{o}lya Urn LDA with partially collapsed LDA, on which it is based and which was found to be the fastest parallel sampler in \textcite{magnusson15}.
We ran 1,000 iterations with $K = 10, 100, 1000$ with five different random number seeds.

\begin{table}[t!]
\begin{center}
\begin{tabular}{l r r r}
Corpus & $V$ & $D$ & $N$\\
\hline
Enron & 27 508 & 39 860 & ~ 6 377 365\\
New York Times & 273 405 & 1 825 116 & 494 687 810\\
PubMed & 89 987 & ~ 8 199 999& ~ 768 434 972\\
\hline
\end{tabular}
\vspace*{1ex}
\caption{Corpora used in experiments.}
\label{corpora}
\end{center}
\end{table}

\begin{figure*}
\includegraphics[width=\textwidth]{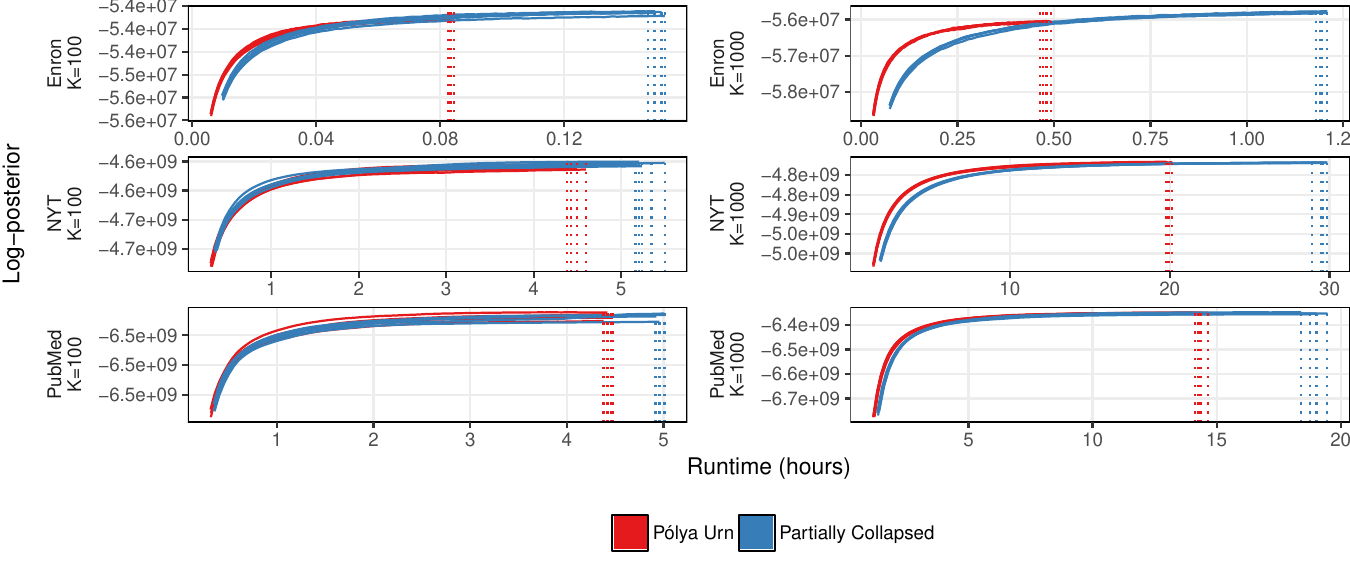}
\caption{Log-posterior trace plots for standard partially collapsed LDA and P\'{o}lya Urn LDA, on a runtime scale (hours). Dashed line indicates completion of 1,000 iterations.}
\label{conv-realtime}
\end{figure*}

\begin{figure*}[t!]
\includegraphics[width=\textwidth]{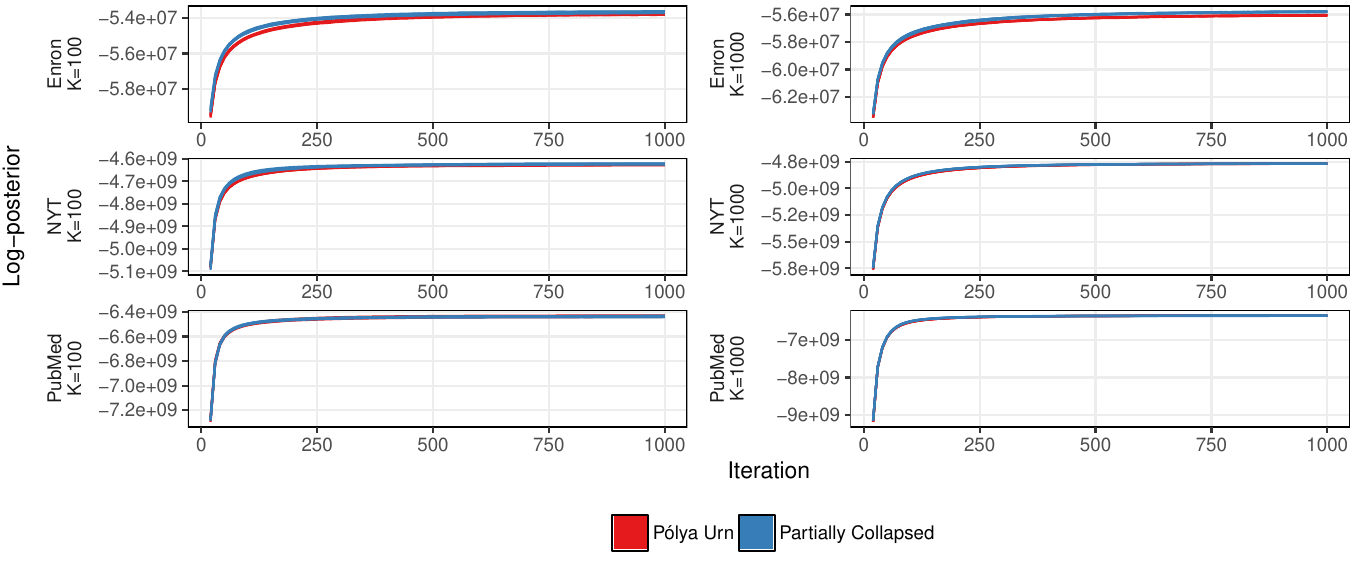}
\caption{Log-posterior trace plots for standard partially collapsed LDA and P\'{o}lya Urn LDA, on a per iteration scale.}
\label{conv-iter}
\end{figure*}

\begin{figure*}[t!]
\bigskip
\includegraphics[width=0.5\textwidth]{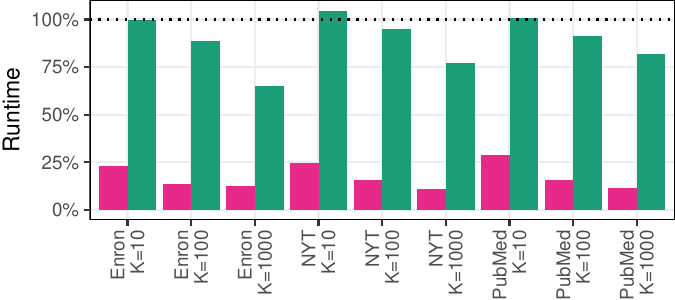}
\includegraphics[width=0.5\textwidth]{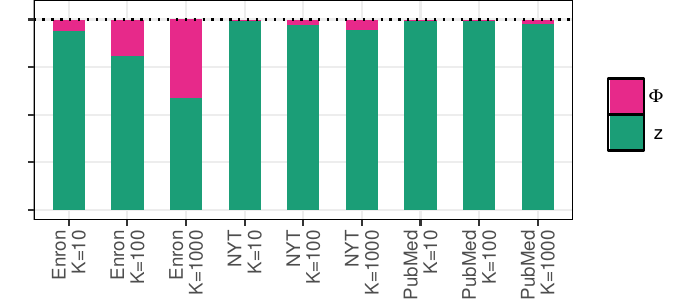}
\caption{Left: runtime for $\m{\Phi}$ and $\v{z}$ for P\'{o}lya Urn LDA, as a percentage of standard partially collapsed LDA -- lower values are faster. Right: percentage of runtime taken by $\m{\Phi}$ and $\v{z}$ for P\'{o}lya Urn LDA.}
\label{phi-z-speed}
\end{figure*}

\begin{figure*}[t!]
\bigskip
\includegraphics[width=\textwidth]{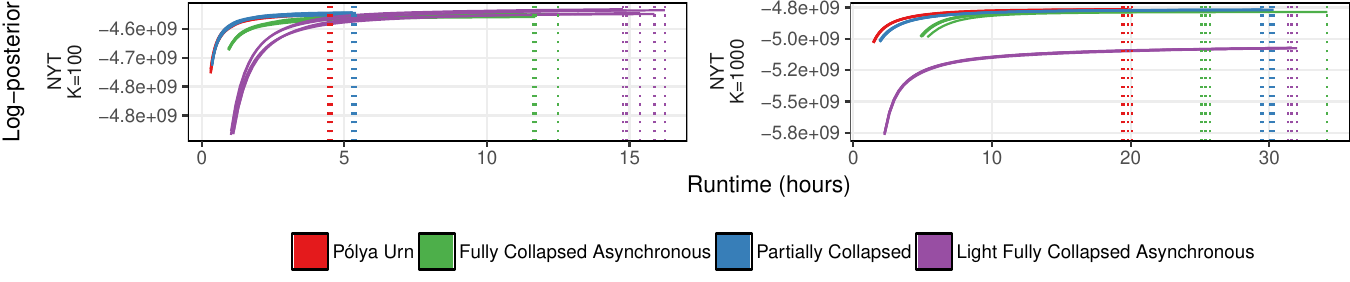}
\caption{Runtime and convergence for P\'{o}lya Urn LDA, Partially Collapsed LDA, Fully Collapsed Sparse LDA, and Fully Collapsed Light LDA, on the NYT corpora.}
\label{algorithm-comparison}
\end{figure*}

\begin{figure*}[t!]
\bigskip
\includegraphics[width=0.5\textwidth]{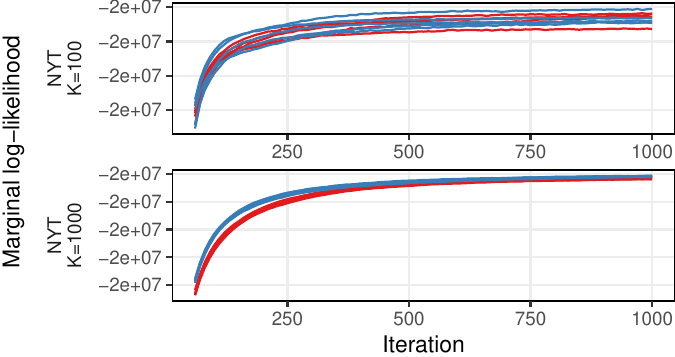}
\includegraphics[width=0.5\textwidth]{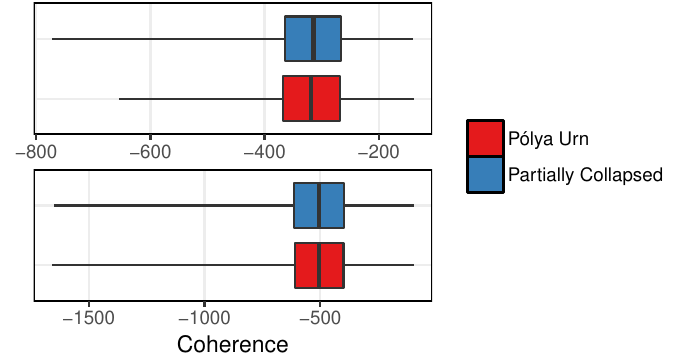}
\caption{Left: test set log-likelihood for P\'{o}lya Urn LDA and Partially Collapsed LDA. Right: test set topic coherence for P\'{o}lya Urn LDA and Partially Collapsed LDA.}
\label{topic-evaluation}
\end{figure*}

\begin{figure*}[t!]
\bigskip
\includegraphics[width=\textwidth]{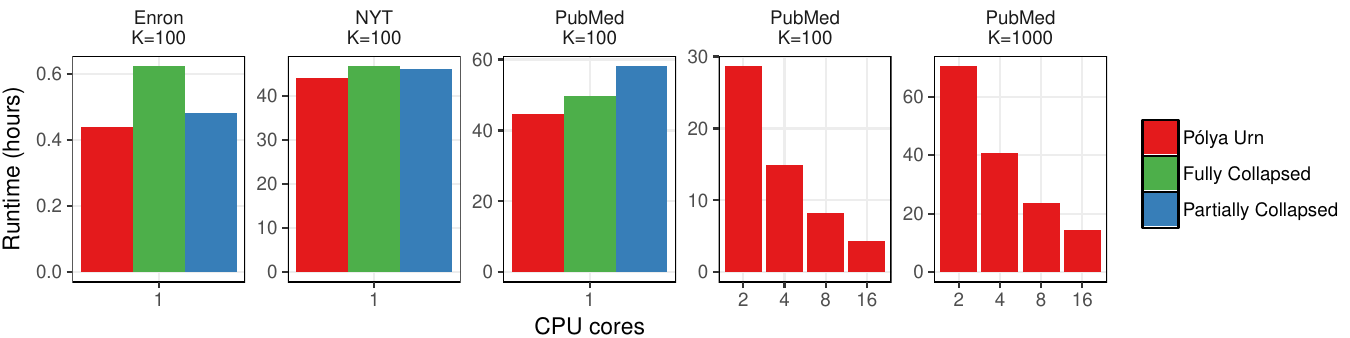}
\caption{Left-most three plots: runtime for P\'{o}lya Urn LDA, Partially Collapsed LDA, and Fully Collapsed Sparse LDA on a single core. Right-most two plots: runtime for P\'{o}lya Urn LDA versus number of available CPU cores.}
\label{runtime-comparison}
\end{figure*}

\begin{figure*}[t!]
\bigskip
\includegraphics[width=\textwidth]{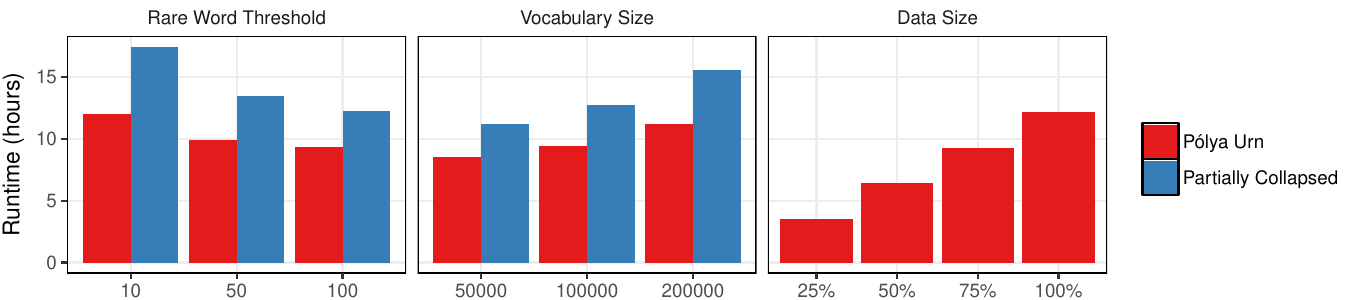}
\caption{Runtime for P\'{o}lya Urn LDA for various rare word thresholds, vocabulary sizes, and data sizes.}
\label{sparsity-comparison}
\end{figure*}

Results can be seen in Figure \ref{conv-realtime}, which gives the marginal unnormalized log-posterior values for $\v{z}$ in real time.
In all cases, P\'{o}lya Urn LDA was faster than standard partially collapsed LDA.
Figure \ref{conv-iter} shows the marginal log-posterior values per iteration.
From this perspective, both algorithms look identical, indicating that P\'{o}lya Urn LDA mixes as quickly as partially collapsed LDA.
We also ran the algorithm for $K=10$: performance of P\'{o}lya Urn LDA and partially collapsed LDA was nearly identical -- this is expected, as $\m{\Phi}$ is too small in this case for the Poisson P\'{o}lya urn to offer any benefit.

Relative runtime for $\m{\Phi}$ and $\v{z}$ separately can be seen in Figure \ref{phi-z-speed} -- as the number of topics increases, the relative speedup for $\v{z}$ increases.
We can also see that sampling $\m{\Phi}$ using the alias tables and the Gaussian approximation in Section \ref{sec:the-algorithm} is uniformly faster by a factor of four to eight.

In our second experiment, we compared P\'{o}lya Urn LDA with partially collapsed LDA \cite{magnusson15}, sparse AD-LDA \cite{yao09}, and Light AD-LDA \cite{yuan15} on the NYT corpora.
Results can be seen in Figure \ref{algorithm-comparison}, which shows that P\'{o}lya Urn LDA is faster for both $K=100$ and $K=1000$.

To ensure that the topics generated by the algorithm are reasonable, we evaluated held-out marginal log-likelihood values for $\v{z}$ for P\'{o}lya Urn LDA and partially collapsed LDA.
For this, we split the NYT corpus into a training and test set at random.
Marginal log-likelihood evaluation on the test set was done using the left-to-right algorithm of \textcite{wallach09}.
We also computed the topic coherence scores of \textcite{mimno11} on NYT -- all of these diagnostics can be seen in Figure \ref{topic-evaluation}.
Both algorithms produced similar test set log-likelihoods, and -- with the exception of one outlier -- similar levels of topic coherence.
This indicates that P\'{o}lya Urn LDA generates topics of similar quality to partially collapsed LDA, and provides empirical evidence that the approximation error induced by the Poisson P\'{o}lya urn is small.

Next, we compared single-core performance of P\'{o}lya Urn LDA with partially collapsed LDA and sparse fully collapsed LDA.
P\'{o}lya Urn LDA was found to outperform fully collapsed sparse LDA and partially collapsed LDA in this setting as well -- results, averaged over all five random number seeds, are given in Figure \ref{runtime-comparison}.
To study parallel scaling, we also looked at performance of P\'{o}lya Urn LDA while varying the number of available processor cores.
We found that the method scales close to linearly with additional overhead for higher levels of parallelism, as expected for a massively parallel algorithm.

Finally, to better understand where the performance improvement comes from, we compared P\'{o}lya Urn LDA with partially collapsed LDA under different data sizes, vocabulary sizes, and rare word thresholds.
For each fixed vocabulary size, words were selected according to their TFIDF values.
Results can be seen in Figure \ref{sparsity-comparison}.
Here, we see that the relative difference between P\'{o}lya Urn LDA and partially collapsed LDA is larger for high vocabulary sizes and lower rare word thresholds.
This is because increasing the dimension of the $\m\Phi$ matrix, whose number of columns is linear in vocabulary size, doesn't slow down $\v{z}$ as much under P\'{o}lya Urn LDA as under partially collapsed LDA.
This indicates that at least some of the speedup is obtained from rare words, for which most entries in the corresponding column of $\m\Phi$ are zero at any given iteration.

Our algorithm's performance depends on the particular data set under study.
Compared to the standard partially collapsed sampler, the P\'{o}lya urn's speedup comes from the additional sparsity in $\m{\Phi}$.
This sparsity will increase with $K$ for a fixed size corpus.
In the PubMed example, our sampling time for $\v{z}$ is reduced, compared to the partially collapsed sampler, by as much as 20\% for $K=1000$ when compared with $K = 100$.
This matches what is expected from our complexity result in Section \ref{sec:comp_complex}, as there will be a substantial number of words where $K_{v(i)}^{(\m\Phi)} < K_{d(i)}^{(\m{m})}$, such as rare words or word types that only belong to a few topics.

\section{Discussion} \label{sec:discussion}

In this paper, we introduce the Poisson P\'{o}lya urn distribution as an asymptotically exact approximation to the Dirichlet distribution and use this to define P\'{o}lya Urn LDA for fast parallel sampling in LDA and similar models.
Our algorithm is \emph{doubly sparse}, in that it can take advantage of sparsity in the sufficient statistics $\m{m}$ and $\m{n}$ for $\m\Phi$ and $\m\Theta$ simultaneously.
It inherits the parallelizability of partially collapsed LDA, which adds the following improvements.
\begin{enumerate}[(1)]
\item The word-topic probability matrix $\m{\Phi}$ is sparse and can be stored efficiently.
\item Sampling $\m{\Phi}$ requires Poisson draws, which can be sampled efficiently -- see Section \ref{sec:the-algorithm}.
\item The topic indicators $\v{z}$ can be sampled efficiently by utilizing the added sparsity in $\m{\Phi}$ -- this improves our iterative complexity to $O\sbr{\sum_{i=1}^N \min\cbr{K_{d(i)}^{(\m{m})}, K_{v(i)}^{(\m\Phi)}}}$.
\end{enumerate}

To our knowledge, this is the lowest computational complexity in any non-Metropolis-Hastings-based LDA sampler.
It enables our algorithm to better take advantage of the power-law structure present in natural language.
Combined with faster Poisson draws, we obtain significant speedup under standard corpora used for benchmarking -- see Section \ref{sec:results}.

The Poisson P\'{o}lya urn is an asymptotic approximation whose error we prove vanishes with increasing data set size.
Our proof is based crucially on the Central Limit Theorem, which suggests that the convergence rate of our approximation is likely to be at least $O(\sqrt{n_{k,\bigcdot}})$, though we have not rigorously demonstrated this.

In practice, we find the approximation error to be negligible, as from a convergence point of view, P\'{o}lya Urn LDA and partially collapsed LDA behave near-identically.
This mirrors the behavior of scalable approximate MCMC methods recently proposed in other areas -- see, e.g., \textcite{johndrow17}.
Indeed, if we assume that the standard partially collapsed Gibbs sampler is geometrically ergodic, together with several other regularity conditions, then the theory in \textcite{negrea17}, as well as \textcite{johndrow15}, provides bounds on the resulting Markov chain's stationary distribution in terms of the correct posterior.
On the other hand, this is also the method's main drawback: the approximation's convergence rate is unlikely to be dimension-free and thus may depend on $K$, so we advise caution when considering it for large $K$ with small data sets.

Our analysis -- and, indeed, virtually every other complexity analysis in the LDA literature of which we are aware -- is limited in that it describes the \emph{iterative complexity} of the algorithm, not the \emph{effective complexity}: it does not take into account the rate at which the Markov chain converges to the posterior, which will differ by algorithm.
To calculate the effective complexity, we would need to prove that the chain is geometrically ergodic and compute the ergodicity coefficient, or find the spectral gap of the corresponding Markov operator.
For LDA, this is currently an open problem.
Similar issues occur in comparing MCMC approaches with Variational Bayes, Expectation-Maximization, and other methods, for which posterior approximation error needs to be considered yet is often challenging to assess.

Our approach can be combined with Metropolis-Hastings proposals such as those described in \textcite{li14} and \textcite{yuan15}.
We did not consider these, as they significantly alter the convergence rate -- this would make the speedup from the Poisson P\'{o}lya urn more difficult to understand.
To compare these approaches with the standard partially collapsed sampler on a runtime basis see Figure \ref{topic-evaluation} and \textcite{magnusson15}.

P\'{o}lya Urn LDA is well-suited to a variety of parallel environments.
Here we focus on the multicore setting and find that P\'{o}lya Urn LDA outperforms partially collapsed LDA, which in turn was found to outperform other approaches in \textcite{magnusson15}.
Other parallel environments may yield different results: in particular, AD-LDA may perform better in the compute cluster setting, where asynchronous delays may serve to counteract the algorithm's bias -- see \textcite{terenin16a}.
P\'{o}lya Urn LDA could also be implemented in such an environment -- in fact, it can potentially be implemented using the \emph{Exact Asynchronous Gibbs} scheme in \textcite{terenin16a}, so that asynchronous delays introduce no approximation error into the posterior.
These considerations are left to future work.

Developing massively parallel schemes for topic models that are both practical and theoretically understandable is an area of active current research.
Our use of the Poisson P\'{o}lya urn approximation is generic: though we focus on LDA in this paper, the technique is equally applicable to any other topic model or language model making use of the Dirichlet distribution.
This includes generative document cluster models \cite{zhang05}, hidden Markov models used in parts-of-speech tagging \cite{goldwater07}, mixed membership stochastic blockmodels \cite{airoldi08}, and others.
We hope that our work contributes to greater understanding in these areas.

\section*{Acknowledgments}

We are thankful to Graham Neubig, Tamara Broderick, Eric P. Xing, Qirong Ho, Wei Dai, Willie Neiswanger, Murat Demirbas, Shawfeng Dong, Thanasis Kottas, Kunal Sarkhel, and Mattias Villani for their thoughts, and to several referees whose comments substantially improved the paper.
Membership on this list does not imply agreement with the ideas expressed here, nor responsibility for any errors that may be present.
M{\aa}ns Magnusson was partially financially supported by the Swedish Foundation for Strategic Research (Smart Systems: RIT 15--0097).

\section*{Appendix A: Efficiency of Collapsed and Uncollapsed Gibbs Samplers}

Here we demonstrate that a collapsed Gibbs sampler can be arbitrarily less efficient than an uncollapsed Gibbs sampler.
Our example is a scaled $\f{T}$ distribution on $\R^2$ with 5 degrees of freedom, with both components having mean 0 and scale matrix $\m{\Sigma}$ such that $\Sigma_{11} = \Sigma_{22} = 1$ and $\Sigma_{21} = \Sigma_{12} = \rho$.
We define our variables $\Phi$ and $\v{z}$ in a fashion analogous to those in LDA.

\begin{algorithm}[Collapsed Gibbs Sampler]\label{t-gibbs}
Repeat until convergence:
\begin{itemize}
\item Sample $z_1 \given z_2 \dist[T][\rho z_2, (0.8 + 0.2z_2^2)(1 - \rho^2), 5]$
\item Sample $z_2 \given z_1 \dist[T][\rho z_1, (0.8 + 0.2z_1^2)(1 - \rho^2), 5]$
\end{itemize}
\end{algorithm}

\begin{algorithm}[Uncollapsed Gibbs Sampler]\label{t-ig-gibbs}
Repeat until convergence:
\begin{itemize}
\item Sample $\Phi \dist[IG] [3, 0.5 \v{z}^T \m{\Sigma}^{-1} \v{z} + 2]$
\item Sample $\v{z} \dist[N]_2 [\v{0}, \Phi \m{\Sigma}]$
\end{itemize}
\end{algorithm}

By taking $\rho$ sufficiently large, Algorithm \ref{t-gibbs} can be made to converge at an arbitrarily slow rate.
However, Algorithm \ref{t-ig-gibbs} will converge at a rate independent of $\rho$.
This can be seen intuitively by noting that for $\rho = 1$, the Markov Chain in Algorithm 1 will be reducible and will thus fail to converge at all, whereas the chain in Algorithm 2 will suffer no such difficulty.
This is illustrated in Figure \ref{t-comparison}, for $\rho = \{0.9,0.99,0.999\}$.
This example demonstrates that a collapsed Gibbs sampler may be arbitrarily slower than an uncollapsed or partially collapsed Gibbs sampler.

This serves as an explicit counterexample to the argument made in \textcite{newman09} on the basis of Theorem 1 in \textcite{liu95}.
The error in this argument is as follows: \textcite{liu95} assume reversibility and derive regularity conditions under which the correlation of two successive samples $(\v{z}^{(i)}, \v{z}^{(i+1)})$ must strictly increase when some other parameter $\Phi$ is inserted between them.
One crucial assumption is that $\v{z}$ is sampled as a block whether $\Phi$ is present or not.
This assumption does not apply to the example in this appendix, nor to LDA, where $\v{z}$ is sampled as a block in one algorithm and component-wise in the other.

More generally, analogous issues have been shown to lead to provably slow mixing in certain large-scale Gibbs samplers -- see \textcite{johndrow16} for details.
It may be possible to apply similar arguments to LDA, or to extend the argument in \textcite{liu95} to the case where the dimensionality of $\v{z}$ grows -- this may show in certain samplers that mixing slows down with data size.
This is a concern for any algorithm, as it would result in less favorable effective complexity -- see Section \ref{sec:discussion}.

\begin{figure}[t!]
\begin{center}
\includegraphics[width=0.5\textwidth]{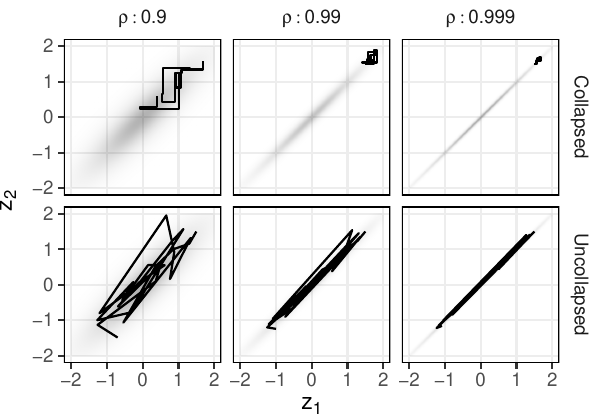}
\end{center}
\caption{Trace plots for the collapsed and uncollapsed Gibbs samplers for a $\f{T}$ distribution on $\R^2$ with $\rho \in \{0.9, 0.99, 0.999\}$, together with target distributions in grayscale. In 25 iterations, the uncollapsed Gibbs sampler has traversed the entire distribution multiple times, whereas the collapsed Gibbs sampler has not done so even once, covering increasingly less distance for larger $\rho$.}
\label{t-comparison}
\end{figure}

\section*{Appendix B: Poisson-P\'{o}lya Urn Asymptotic Convergence}

In this section we prove the remainder of Theorem \ref{polya-urn-conv}.
We first introduce some notation used throughout the rest of Appendix B.
Let $\v{x}, \v{x}^* \in \R^p$ be random vectors, let $\varpi \in \R^+$, and let $\v{F} = (F_1, .., F_p)$ be a probability vector.
We allow $\v{x}$, $\v{x}^*$, and $\v{F}$ to be implicitly indexed by $\varpi$.
The distributions of $\v{x}$ and $\v{x}^*$ will vary by context.
As before, we work exclusively with convergence in distribution, which we again metrize via the L\'{e}vy-Prokhorov metric to simplify notation.
Thus $d_{\f{LP}}(\v{x}, \v{x}^*) \goesto 0$ can be taken to mean that $\v{x}$ and $\v{x}^*$ converge in distribution, i.e., their cumulative distribution functions converge pointwise.
With this notation in mind, we now introduce several lemmas.

\begin{lemma}[Gaussian Asymptotic Distribution of a Dirichlet Random Vector] \label{dirichlet-gaussian}
Let $\v{x} \dist[Dir](\varpi, \v{F})$.
Then there exists a sequence of multivariate Gaussian distributions $\v{x}^*$ indexed by $\varpi$ such that $d_{\f{LP}}(\v{x}, \v{x}^*) \goesto 0$ as $\varpi \goesto \infty$.
\end{lemma}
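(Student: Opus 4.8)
The plan is to establish the Gaussian limit for a Dirichlet random vector via a standard delta-method argument on the Gamma representation, then package the result in the L\'{e}vy-Prokhorov metrization already set up in the paper. Write $\v{x} \dist[Dir](\varpi, \v{F})$ through its Gamma construction: let $\gamma_j \dist[G](\varpi F_j, 1)$ independently for $j = 1, \dots, p$, and set $x_j = \gamma_j / S$ where $S = \sum_{j=1}^p \gamma_j$. As $\varpi \goesto \infty$, each shape parameter $\varpi F_j \goesto \infty$ (for any $j$ with $F_j > 0$; coordinates with $F_j = 0$ are degenerate at $0$ and can be dropped), so by the Central Limit Theorem for Gamma random variables, $(\gamma_j - \varpi F_j)/\sqrt{\varpi F_j}$ converges in distribution to a standard Gaussian, jointly across $j$ by independence. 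Equivalently, the vector $\v{\gamma}/\varpi$ concentrates at $\v{F}$ with fluctuations of order $\varpi^{-1/2}$.

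Next I would apply the multivariate delta method to the map $g(\v{\gamma}) = \v{\gamma}/\sum_j \gamma_j$, which is smooth in a neighborhood of $\v{F}$ (since $\sum_j F_j = 1 \neq 0$). This yields $\sqrt{\varpi}(\v{x} - \v{F})$ converging in distribution to a centered multivariate Gaussian whose covariance is $J \Lambda J^T$, where $\Lambda = \operatorname{diag}(\v{F})$ is the limiting covariance of $\v{\gamma}/\sqrt{\varpi}$ and $J$ is the Jacobian of $g$ at $\v{F}$; a short computation identifies this limit covariance as $\operatorname{diag}(\v{F}) - \v{F}\v{F}^T$, the familiar asymptotic covariance of the Dirichlet. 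Then I would \emph{define} the approximating sequence $\v{x}^*$ indexed by $\varpi$ to be the multivariate Gaussian with mean $\v{F}$ and covariance $\varpi^{-1}(\operatorname{diag}(\v{F}) - \v{F}\v{F}^T)$. Since both $\sqrt{\varpi}(\v{x} - \v{F})$ and $\sqrt{\varpi}(\v{x}^* - \v{F})$ converge to the same Gaussian law, an application of Slutsky's theorem (undoing the affine rescaling) gives that $\v{x}$ and $\v{x}^*$ have the same distributional limit, hence $d_{\f{LP}}(\v{x}, \v{x}^*) \goesto 0$, because on the separable simplex (and its affine hull) convergence in the L\'{e}vy-Prokhorov metric coincides with convergence in distribution, as noted in the main text.

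The main obstacle is bookkeeping rather than depth: I must be careful that $\v{F}$ is allowed to vary with $\varpi$, since in the LDA application $\v{F}$ depends on $n_{k,\bigcdot}$. This means the CLT step must be stated uniformly enough that it survives a varying $\v{F}$ — in particular, coordinates of $\v{F}$ may drift toward $0$, in which case the corresponding Gamma shape parameter need not diverge and the Gaussian approximation degrades on those coordinates. The clean way to handle this is to observe that such coordinates contribute negligible mass and to control them separately (e.g., by a truncation argument showing their total contribution to both $\v{x}$ and $\v{x}^*$ is $o_P(1)$), while the CLT plus delta method applies on the coordinates bounded away from $0$. A secondary technical point is justifying the delta method when the point $\v{F}$ at which we linearize moves; this is handled by noting $g$ and its derivatives are uniformly continuous on compact subsets of the open simplex, so the linearization error is uniformly $o_P(\varpi^{-1/2})$ as long as $\v{F}$ stays in such a compact set, with the boundary coordinates again absorbed into the negligible remainder.
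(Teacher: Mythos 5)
Your proposal follows essentially the same route as the paper's proof: the Gamma representation of the Dirichlet, a central limit theorem for the Gamma components as the shape parameters diverge, and the multivariate delta method applied to the normalization map $g(\v\gamma)=\v\gamma/\sum_j\gamma_j$ (exploiting its scale invariance). Your explicit identification of the limiting covariance $\operatorname{diag}(\v{F})-\v{F}\v{F}^{T}$ and your truncation argument for coordinates with $F_j\to 0$ are somewhat more careful than the paper, which only asserts in passing that the result holds for varying $\v{F}$.
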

\begin{proof}
Without loss of generality suppose $\varpi \in \N$.
A Dirichlet random vector admits the representation
\begin{equation}
\v{x} = \sbr{\frac{\gamma_1}{\sum_{j=1}^p \gamma_j}, .., \frac{\gamma_p}{\sum_{j=1}^p \gamma_j}} = \frac{\v\gamma}{\sum_{j=1}^p \gamma_j}
\end{equation}
with $\gamma_i \dist[G](\varpi F_i, 1)$.
Since a sum of gamma random variables is gamma, we may write
\[
\v\gamma = \sum_{i=1}^{\varpi} \v{\hat\gamma_i}
\]
where for each $j$ we have $\hat\gamma_{ij} \iid[G](F_j, 1)$.
Note that $\E(\v{\hat\gamma}_i) = \v{F}$.
It is now clear that taking $\varpi \in \N$ entails no loss in generality, because we may replace $F_j$ with $\varpi F_j / \lceil\varpi\rceil$ where $\lceil\cdot\rceil$ is the ceiling function, and take the summation from $1$ to $\lceil\varpi\rceil$.
Since $\v{\hat\gamma}_i$ have finite moments, we have by the Central Limit Theorem that
\begin{equation}
\sqrt{\varpi}\sbr{\frac{1}{\varpi} \v\gamma - \v{F}} \overset{\f{d}}{\goesto} \f{N}(\v{0},\m\Sigma)
\end{equation}
for some covariance matrix $\m\Sigma$ as $\varpi \goesto \infty$ where $\overset{\f{d}}{\goesto}$ denotes convergence in distribution.
Next, we use the delta method to show this carries over to $\v{x}$.
Define the function $g : \R^p \goesto \R^p$ by $\v{x} = g(\v\gamma)$.
Since $g$ is continuously differentiable for all strictly positive input, it is continuously differentiable at $\v{F}$, as $\v{F}$ is a probability vector.
Then by Theorem 3.1 of \textcite{vandervaart00}, we have
\begin{equation}
\sqrt{\varpi}\sbr[2]{g(\v\gamma / \varpi) - g(\v{F})} \overset{\f{d}}{\goesto} \f{N}(\v{0},\m{\tilde\Sigma})
\end{equation}
for a covariance matrix $\m{\tilde\Sigma}$ as $\varpi \goesto \infty$.
But notice that $g(\v\gamma / \varpi) = g(\v\gamma) = \v{x}$.
Since the L\'{e}vy-Prokhorov metric metrizes convergence in distribution, the result follows.
\end{proof}

\begin{lemma}[Asymptotic Distribution and Moments of the Poisson-P\'{o}lya Urn] \label{polya-urn-multinomial}
Let $\v{x} \dist[PPU](\varpi, \v{F})$ be defined via the hierarchical representation (\ref{hierarchical-representation}).
Let $\v{x}^* \dist[MN](\varpi, \v{F})$, where $\f{MN}$ denotes the multinomial distribution.
Then, as $\varpi \goesto \infty$, we have that $d_{\f{LP}}(\varpi \v{x}, \v{x}^*) \goesto 0$ and for all $i,j$ that
\begin{align}
|\E(x_i) - F_i| &\goesto 0
&
\abs{\Var(x_i) - \frac{F_i (1 - F_i)}{\varpi}} &\goesto 0
\end{align}
\begin{equation}
\abs{\Cov(x_i, x_j) - \frac{-F_i F_j}{\varpi}} \goesto 0 \text{ for } i \neq j
\,.
\end{equation}
\end{lemma}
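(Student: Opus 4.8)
The plan is to exploit the hierarchical representation (\ref{hierarchical-representation}) directly. Write $\v{x} = \frac{1}{\pi}\sum_{j=1}^{\pi}\v{I}_{\upsilon^{(j)}}$ with $\pi \dist[Pois]^+(\varpi)$ and $\upsilon^{(j)}\iid\v{F}$. The key observation is that \emph{conditional on} $\pi = n$, the vector $\sum_{j=1}^{n}\v{I}_{\upsilon^{(j)}}$ is exactly multinomial $\f{MN}(n,\v{F})$, since each $\v{I}_{\upsilon^{(j)}}$ is an independent single-trial categorical draw with probabilities $\v{F}$. So $\varpi\v{x} = \frac{\varpi}{\pi}\cdot S_\pi$ where $S_n \dist[MN](n,\v{F})$. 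First I would establish that $\pi/\varpi \goesto 1$ in probability as $\varpi\goesto\infty$: the conditioning to be nonzero changes the distribution by an amount $O(e^{-\varpi})$ in total variation, and an unconditioned $\f{Pois}(\varpi)$ variable concentrates at $\varpi$ by the weak law. Then $\varpi/\pi \goesto 1$ in probability as well.

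Next I would handle the distributional claim $d_{\f{LP}}(\varpi\v{x}, \v{x}^*)\goesto 0$ with $\v{x}^* \dist[MN](\varpi,\v{F})$. The cleanest route is a coupling: realize $S_\pi$ and an independent-of-$\pi$ multinomial on the same probability space, note that $\varpi\v{x} = S_\pi + (\varpi/\pi - 1)S_\pi$, and observe that the multiplicative correction term $(\varpi/\pi - 1)S_\pi$ has each coordinate bounded by $|\varpi/\pi - 1|\cdot\pi \le |\varpi - \pi|$, which is $o_P(\varpi)$ — actually $O_P(\sqrt\varpi)$ — so after the CLT-scale normalization it is negligible. Combined with the fact that $S_\pi$ and $S_\varpi^*$ have the same limiting behavior (both are $\f{MN}$ with sample-size parameter concentrating at $\varpi$), Slutsky's theorem gives the convergence in distribution, which is equivalent to $d_{\f{LP}}\goesto 0$ by separability as noted in the excerpt. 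I would be slightly careful to phrase this as convergence of $\varpi\v{x} - \v{x}^*$ appropriately rescaled, or simply invoke that both converge to the same Gaussian limit after centering and scaling by $\sqrt\varpi$, mirroring the delta-method argument in Lemma \ref{dirichlet-gaussian}.

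For the moment claims I would condition on $\pi$ and use the tower property. Given $\pi = n$, the random vector $\v{x}$ has mean $\v{F}$ exactly and covariance $\frac{1}{n}(\diag(\v{F}) - \v{F}\v{F}^T)$, the standard multinomial-proportion moments. Thus $\E(x_i) = F_i$ exactly (not merely asymptotically), so that claim is immediate. For the variance, $\Var(x_i) = \E\sbr{\frac{F_i(1-F_i)}{\pi}} + \Var\sbr{F_i}$ where the second term vanishes since the conditional mean is constant; so $\Var(x_i) = F_i(1-F_i)\,\E(1/\pi)$, and the task reduces to showing $\varpi\,\E(1/\pi)\goesto 1$. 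This follows because $\E(1/\pi)$ for $\pi\dist[Pois]^+(\varpi)$ is $\frac{1}{\varpi}(1 + o(1))$ — one can get this from the identity $\E(1/(\pi+1))$ for an unconditioned Poisson being $(1-e^{-\varpi})/\varpi$ together with the $O(e^{-\varpi})$ total-variation cost of the conditioning and a uniform-integrability argument to pass from $1/(\pi+1)$ to $1/\pi$, or more simply from concentration of $\pi$ around $\varpi$ plus a dominated-convergence bound on the tail contribution. The covariance claim is identical with $F_i(1-F_i)$ replaced by $-F_iF_j$.

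The main obstacle is the uniform-integrability / tail control needed to justify that $\varpi\,\E(1/\pi)\goesto 1$: one must rule out the possibility that the small-but-positive probability of $\pi$ being near its conditioned minimum (i.e.\ near $1$) contributes a non-vanishing amount to $\E(1/\pi)$ after multiplying by $\varpi$. Since $\P(\pi = 1)$ is of order $\varpi e^{-\varpi}$ and contributes $\varpi\cdot\varpi e^{-\varpi}\cdot 1 = \varpi^2 e^{-\varpi}\goesto 0$, and similarly for any fixed small value, a Chernoff-type bound $\P(\pi \le \varpi/2) \le e^{-c\varpi}$ makes the entire low-$\pi$ region contribute $O(\varpi^2 e^{-c\varpi})\goesto 0$; on the complementary event $1/\pi \le 2/\varpi$, so dominated convergence applies. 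This is routine but is the one place where conditioning-to-be-nonzero could in principle cause trouble, so I would write it out carefully.
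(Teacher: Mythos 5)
Your moment computations are correct, and they take a genuinely different and more self-contained route than the paper, which instead verifies pointwise convergence of probability generating functions by adapting the argument of \textcite{oza05}. Conditioning on $\pi$ and applying the tower property gives $\E(x_i) = F_i$ exactly, $\Var(x_i) = F_i(1-F_i)\,\E(1/\pi)$, and $\Cov(x_i,x_j) = -F_iF_j\,\E(1/\pi)$, and your Chernoff-plus-dominated-convergence argument that $\varpi\,\E(1/\pi)\goesto 1$ is sound. This is arguably more informative than the paper's proof, since it identifies the mean exactly and the second moments up to the single scalar $\E(1/\pi)$.

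The distributional step, however, contains a step that would fail as written. You decompose $\varpi\v{x} = S_\pi + (\varpi/\pi - 1)S_\pi$ and assert that the correction term, being of order $\sqrt{\varpi}$ in probability, is negligible after CLT-scale normalization. It is not: the normalization divides by $\sqrt{\varpi}$, so a term of exact order $\sqrt{\varpi}$ survives as an order-one contribution to the limit. The companion claim that $S_\pi$ and $S^*_\varpi \dist[MN](\varpi,\v{F})$ have the same limiting behavior is also false at that scale: by Poissonization, the coordinates of $S_\pi$ are independent Poissons with variances $\varpi F_j$ and zero covariances, whereas the multinomial has variances $\varpi F_j(1-F_j)$ and covariances $-\varpi F_i F_j$. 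Your own moment calculation exposes the inconsistency -- if the correction term were negligible, $\Var(\varpi x_i)$ would be the Poisson variance $\varpi F_i$ rather than the multinomial variance $\varpi F_i(1-F_i)$ that you correctly derive. The two errors happen to cancel: dividing by $\pi$ removes exactly the extra variance introduced by randomizing the sample size. The conclusion is therefore recoverable, but only by treating the pair $\sbr{(S_\pi - \varpi\v{F})/\sqrt{\varpi},\ (\pi-\varpi)/\sqrt{\varpi}}$ jointly, passing to its Gaussian limit, and applying the delta method to the map $(\v{s},n) \mapsto \varpi\v{s}/n$ -- not by discarding either piece and invoking Slutsky.
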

\begin{proof}
To show convergence in distribution, and convergence of the first two moments, it suffices to show that the probability generating functions of both random variables converge pointwise and are twice differentiable in a neighborhood around $\v{1}$.
This follows immediately from the argument in \textcite{oza05} by replacing their expression $\frac{1}{N}(x_1 + .. + x_N)$ with $\sum_{i=1}^p F_i x_i$, and noting that the probability generating functions of $\f{Pois}$ and $\f{Pois}^+$ random variables converge pointwise as their means go to $\infty$.
\end{proof}

\begin{lemma}[Gaussian Asymptotic Distribution of a Multinomial Random Vector] \label{multinomial-gaussian}
Let $\v{x} \dist[MN](\varpi, \v{F})$.
Then there exists a sequence of multivariate Gaussian distributions $\v{x}^*$ indexed by $\varpi$ such that $d_{\f{LP}}(\v{x}, \v{x}^*) \goesto 0$ as $\varpi \goesto \infty$.
\end{lemma}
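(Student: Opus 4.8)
The plan is to follow the template of the proof of Lemma~\ref{dirichlet-gaussian}, with the Gamma representation of the Dirichlet replaced by the representation of the multinomial as a sum of independent and identically distributed indicator vectors, and with the univariate Central Limit Theorem replaced by its multivariate analogue. Because a multinomial vector is an \emph{exact} sum of IID terms rather than a nonlinear transformation of one, no delta-method step is needed, so this argument is in fact shorter than that of Lemma~\ref{dirichlet-gaussian}. (Together with Lemmas~\ref{dirichlet-gaussian} and~\ref{polya-urn-multinomial}, this lemma supplies the Gaussian intermediary needed to finish Theorem~\ref{polya-urn-conv}.)

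Concretely, I would first take $\varpi \in \N$, the reduction to this case being identical to the one used in Lemma~\ref{dirichlet-gaussian}. Then $\v{x} \dist[MN](\varpi, \v{F})$ has the representation $\v{x} = \sum_{i=1}^{\varpi} \v{Y}_i$ with $\v{Y}_i \iid[MN](1, \v{F})$, so each $\v{Y}_i$ equals the $j$th standard basis vector with probability $F_j$, $\E(\v{Y}_i) = \v{F}$, and $\Cov(\v{Y}_i) = \m{\Sigma}$ with $\m{\Sigma} = \f{diag}(\v{F}) - \v{F}\v{F}^T$. The multivariate Lindeberg--L\'{e}vy Central Limit Theorem -- reduced to the scalar case by the Cram\'{e}r--Wold device applied to each $\v{a}^T\v{Y}_i$ -- then gives
\[
\frac{1}{\sqrt{\varpi}}\sbr{\v{x} - \varpi\v{F}} \overset{\f{d}}{\goesto} \f{N}(\v{0}, \m{\Sigma})
\]
as $\varpi \goesto \infty$. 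I would take $\v{x}^*$ to be the sequence $\f{N}(\varpi\v{F}, \varpi\m{\Sigma})$, matched to $\v{x}$ in mean and covariance; since $\m{\Sigma}$ is singular these are degenerate Gaussians supported on the hyperplane $\cbr{\sum_j x_j = \varpi}$, exactly as the Dirichlet's Gaussian approximation in Lemma~\ref{dirichlet-gaussian} is supported on the simplex, so one reads \emph{multivariate Gaussian} to include degenerate ones.

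The delicate point -- and the only real obstacle -- is passing from this weak convergence of the $\sqrt{\varpi}$-rescaled vectors to the asserted $d_{\f{LP}}(\v{x}, \v{x}^*) \goesto 0$ for the \emph{un-rescaled} ones, since expanding coordinates by $\sqrt{\varpi}$ inflates the L\'{e}vy--Prokhorov metric. I would handle this by observing that the uniform distance between distribution functions, $\sup_{\v{t}} \abs{\P(\v{x} \le \v{t}) - \P(\v{x}^* \le \v{t})}$, is invariant under the common affine map $\v{t} \mapsto (\v{t} - \varpi\v{F})/\sqrt{\varpi}$ (which sends orthants to orthants), hence equals the uniform distance between the distribution functions of $\varpi^{-1/2}\sbr{\v{x} - \varpi\v{F}}$ and $\f{N}(\v{0}, \m{\Sigma})$; this tends to $0$ by P\'{o}lya's theorem, as the limiting degenerate Gaussian has a continuous (atomless) distribution function. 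Since the L\'{e}vy--Prokhorov metric is dominated by this uniform distance, $d_{\f{LP}}(\v{x}, \v{x}^*) \goesto 0$ follows. A multivariate Berry--Esseen bound would give the same conclusion with an explicit $O(\varpi^{-1/2})$ rate.
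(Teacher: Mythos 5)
Your proof is correct and follows essentially the same route as the paper's, whose entire argument is the one-line observation that a multinomial is a sum of IID discrete random vectors to which the Central Limit Theorem applies. The additional care you take with the un-rescaled statement --- using the affine invariance of the uniform distance between distribution functions together with P\'{o}lya's theorem, rather than undoing the $\sqrt{\varpi}$ rescaling inside the Prokhorov metric directly (which would only yield an $O(1)$ bound against the Berry--Esseen rate) --- addresses a point the paper passes over silently, and is consistent with the paper's own convention of reading $d_{\f{LP}}(\v{x},\v{x}^*) \goesto 0$ as convergence of the cumulative distribution functions.
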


\begin{proof}
Since a multinomial can be expressed as the sum of IID discrete distributions, the result follows from the Central Limit Theorem.
\end{proof}

\noindent
We can now complete the convergence proof in Section \ref{sec:the-algorithm}.

\begin{theorem}[Poisson P\'{o}lya Urn Asymptotic Convergence] \label{polya-urn-conv-appendix}
Let $\v{x} \dist[PPU](\varpi, \v{F})$.
Let $\v{x}^* \dist[Dir](\varpi, \v{F})$.
Then $d_{\f{LP}}(\v{x}, \v{x}^*) \goesto 0$ as $\varpi \goesto \infty$ for all $\v{F}$.
\end{theorem}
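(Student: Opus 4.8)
The plan is to chain together the three lemmas already established so that the Poisson-P\'{o}lya urn and the Dirichlet are each shown to be asymptotically Gaussian with the \emph{same} limiting Gaussian, whence they converge to each other. Concretely, I would first invoke Lemma~\ref{polya-urn-multinomial} to transfer from the Poisson-P\'{o}lya urn to the multinomial: it gives $d_{\f{LP}}(\varpi\v{x}, \v{x}^*_{\f{MN}}) \goesto 0$ together with the asymptotic first two moments of $\v{x}$. Then Lemma~\ref{multinomial-gaussian} says the multinomial is itself asymptotically Gaussian, so after dividing by $\varpi$ and centering/scaling by $\sqrt{\varpi}$, the quantity $\sqrt{\varpi}(\v{x} - \v{F})$ converges in distribution to a Gaussian $\f{N}(\v{0}, \m\Sigma_{\f{MN}})$, where $\m\Sigma_{\f{MN}}$ has entries $F_i(1-F_i)$ on the diagonal and $-F_iF_j$ off-diagonal — this is exactly the covariance structure read off from the moment formulas in Lemma~\ref{polya-urn-multinomial}.

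On the Dirichlet side, Lemma~\ref{dirichlet-gaussian} already establishes that $\sqrt{\varpi}(\v{x}^* - \v{F})$ converges to some Gaussian $\f{N}(\v{0}, \m{\tilde\Sigma})$; I would pin down $\m{\tilde\Sigma}$ either by redoing the delta-method computation of Lemma~\ref{dirichlet-gaussian} with the explicit Jacobian of the normalization map $g$ at $\v{F}$, or — more cleanly — by recalling the classical fact that $\f{Dir}(\varpi, \v{F})$ has mean $\v{F}$, variance $F_i(1-F_i)/(\varpi+1)$, and covariance $-F_iF_j/(\varpi+1)$, so that $\varpi/(\varpi+1) \goesto 1$ forces $\m{\tilde\Sigma} = \m\Sigma_{\f{MN}}$. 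With both sequences $\sqrt{\varpi}(\v{x}-\v{F})$ and $\sqrt{\varpi}(\v{x}^*-\v{F})$ converging in distribution to the same limit, a triangle-inequality argument in the L\'{e}vy-Prokhorov metric (using that the common affine rescaling $\v{y} \mapsto \v{F} + \v{y}/\sqrt{\varpi}$ is continuous) gives $d_{\f{LP}}(\v{x}, \v{x}^*) \leq d_{\f{LP}}(\v{x}, \v{g}_\varpi) + d_{\f{LP}}(\v{g}_\varpi, \v{x}^*) \goesto 0$, where $\v{g}_\varpi$ is the shared Gaussian approximant; since this holds for every fixed $\v{F}$, and the moment computations in Lemma~\ref{polya-urn-multinomial} were uniform in $\v{F}$, the conclusion extends to variable $\v{F}$ as claimed in the statement of Theorem~\ref{polya-urn-conv}.

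The main obstacle I anticipate is the bookkeeping around the normalization: $\v{x}$ and $\v{x}^*$ live on the simplex (so their components are linearly dependent and the limiting Gaussian is degenerate, supported on the hyperplane $\sum_i y_i = 0$), and one must be careful that the delta method / continuous mapping arguments are applied to the free coordinates or, equivalently, that the degenerate Gaussian is handled consistently on both sides. A secondary subtlety is that Lemma~\ref{polya-urn-multinomial} controls $\varpi\v{x}$ rather than $\v{x}$ directly, so the rescaling by $1/\varpi$ must be composed with the CLT-scaling $\sqrt{\varpi}$; keeping these two scalings straight — and verifying that the resulting limit genuinely matches the Dirichlet's — is where the argument could go wrong if done carelessly. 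Everything else is a routine application of Slutsky-type facts and the fact that the L\'{e}vy-Prokhorov metric metrizes weak convergence on the separable simplex.
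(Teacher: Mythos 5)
Your proposal is correct and follows essentially the same route as the paper's own proof: chain Lemmas \ref{polya-urn-multinomial}, \ref{multinomial-gaussian}, and \ref{dirichlet-gaussian} to show both vectors are asymptotically Gaussian, then identify the two limits by matching first and second moments via $\varpi/(\varpi+1) \goesto 1$. Your additional care about the $\varpi$ versus $\sqrt{\varpi}$ rescalings and the degeneracy of the limiting Gaussian on the simplex addresses details the paper's proof passes over silently, but it is the same argument.
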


\begin{proof}
Using Lemma \ref{polya-urn-multinomial}, Lemma \ref{multinomial-gaussian}, and Lemma \ref{dirichlet-gaussian}, it follows that both random vectors converge in distribution to multivariate Gaussians.
Since Gaussians are uniquely characterized by their first two moments, it suffices to show for all $i,j$ that
\begin{align}
|\E(x_i) - F_i| &\goesto 0
&
\abs{\Var(x_i) - \frac{F_i (1 - F_i)}{\varpi + 1}} &\goesto 0
\end{align}
\begin{equation}
\abs{\Cov(x_i, x_j) - \frac{-F_i F_j}{\varpi + 1}} \goesto 0 \text{ for } i \neq j
\end{equation}
as $\varpi \goesto \infty$, since these are the mean and covariance of the Dirichlet.
Since $\frac{\varpi}{\varpi+1} \goesto 1$, the differences between these moments and those in Lemma \ref{polya-urn-multinomial} approach zero, and the result follows.
\end{proof}

\printbibliography

\begin{IEEEbiographynophoto}
Alexander Terenin is a Ph.D student in the Statistics section of the Department of Mathematics at Imperial College London. He was previously at Petuum, Inc. and eBay, Inc. His research uses Bayesian theoretical tools to understand the models used in today's big data settings, while also taking the systems perspective needed for high performance on modern hardware, with a particular focus on Markov Chain Monte Carlo methods in parallel and distributed environments.
\end{IEEEbiographynophoto}

\begin{IEEEbiographynophoto}
M{\aa}ns Magnusson received his MSc degree in Statistics (2011) from Stockholm University. M{\aa}ns has previously worked as a statistician at the Swedish Agency for Education, The Swedish Agency for Crime Prevention and The Swedish Agency for Communicable Disease Control. In 2012 he started his PhD studies at Link\"{o}ping University. His research interests include Bayesian Methods for analyzing textual data and his current focus area is different aspects of topic modeling for large corpora in computational social science and digital humanities.
\end{IEEEbiographynophoto}

\begin{IEEEbiographynophoto}
Leif Jonsson received his MSc degree in Computer Science (1998) from Uppsala University, in the same year he started working at Ericsson AB's research division. In 2010 he started his PhD studies at Link\"{o}ping University. His research interests include applying machine learning techniques to large-scale software development processes to automate traditionally hard to automate tasks. His current focus area is automatic fault localisation and bug triaging.
\end{IEEEbiographynophoto}

\begin{IEEEbiographynophoto}
David Draper is a Professor of Statistics in the Department of Applied Mathematics and Statistics at the University of California, Santa Cruz. His data science consultative relationships have included work with eBay, Amazon, and SoFi. His current research interests include data science, model specification, and fast accurate computation at Big-Data scale, all from a Bayesian perspective.
\end{IEEEbiographynophoto}

\end{document}